\documentclass[a4paper,fleqn]{cas-dc}

\usepackage[numbers]{natbib}
\usepackage{amssymb}
\usepackage{amsmath}

\usepackage{color,array}
\usepackage{soul}
\usepackage{blindtext}
\usepackage{multicol}
\usepackage{algorithm}
\usepackage{algorithm}
\usepackage{amsthm}

\usepackage{svg}
\usepackage{amsmath,amssymb,amsfonts}
\usepackage{algpseudocode}
\usepackage{lipsum, color}
\usepackage{xcolor}
\usepackage{bm}
\usepackage{booktabs}

\usepackage{multirow}
\newtheorem{theorem}{Theorem} 
\newtheorem{corollary}{Corollary}
\newtheorem{definition}{Definition}
\newtheorem{remark}{Remark}
\newcommand{\doi}[1]{doi: \href{https://doi.org/#1}{\textcolor{blue!60!black}{#1}}}
\usepackage{graphicx}
\usepackage{epstopdf}
\usepackage[caption=true,font=small]{subfig}
\usepackage{stfloats}

\newtheorem{lemma}{Lemma}
\let\emptyset\varnothing

\def\tsc#1{\csdef{#1}{\textsc{\lowercase{#1}}\xspace}}
\tsc{WGM}
\tsc{QE}

\begin{document}
\let\WriteBookmarks\relax
\def\floatpagepagefraction{1}
\def\textpagefraction{.001}
\shorttitle{Upper Bounds on the Generalization Error of Deep Learning Models}
\shortauthors{A.R Nuhu et~al.}

\title [mode = title]{Upper Bounds on the Generalization Error of Deep Learning Models via Local Robustness and Stability}                      


\author[label1]{Abdul-Rauf Nuhu}
\author[label1]{Parham M. Kebria}
\author[label1]{Vahid Hemmati}
\author[label2]{Mahmoud Nabil Mahmoud}
\author[label3]{Edward Tunstel} 
\author[label1]{and Abdollah Homaifar\corref{cor1}}


\affiliation[label1]{organization={North Carolina Agricultural and Technical State University},
             Department = {Dept. Electrical and Computer Engineering},
            addressline={1601 E Market St.}, 
            city={Greensboro},
            postcode={27411}, 
            state={NC},
            country={USA}}

\affiliation[label2]{organization={University of Alabama},
            addressline={Dept. Computer Science}, 
            city={Tuscaloosa},
            postcode={35487}, 
            state={AL},
            country={USA}}
\affiliation[label3]{organization={Southwest Research Institute},
            city={San Antonio},
            postcode={78238}, 
            state={TX},
            country={USA}}
\cortext[cor1]{Corresponding author: homaifar@ncat.edu}

\begin{abstract}
Generalization is a critical property of data-driven models, particularly deep learning models deployed in safety-critical applications. Robustness-based generalization bounds have gained attention as a principled way to link robustness properties to generalization performance, often in a data-dependent manner. However, most existing bounds suffer from vacuousness in practical settings, yielding loose upper bounds that greatly exceed the actual error rates and limiting their usefulness for real-world evaluation. While this issue is often attributed to the uncertainty term, a substantial part of the problem originates from the robustness term itself, particularly for the 0-1 loss. Existing approaches typically treat the robustness term as a global measure, ignoring its variation across different sub-regions of the input space. In this work, we propose a  generalization bound that addresses this limitation by scaling the robustness term according to the number of stable and unstable samples within each sub-region. Our bounds incorporate both data- and model-dependent factors while maintaining practical relevance (yielding tighter upper bounds on true error). Experiments on models trained on the ImageNet dataset show that our bounds remain consistently non-vacuous and achieve the tightest estimates among existing methods, closely aligning with empirical performance across a range of robust deep neural networks.
\end{abstract}

\begin{keywords}
Data-dependent bounds \sep Generalization  \sep Model-dependent evaluation  \sep Robustness \sep Robustness-based generalization 
\end{keywords}

\maketitle

\section{Introduction}\label{sec1}
Data-driven models, particularly deep learning classifiers, have recently achieved remarkable performance across a wide range of applications. These models have become integral to many safety-critical systems \cite{10747123, 9659972, kebria2019evaluating, zeleke2025integrated, Abdul2022NSA}. However, they remain vulnerable to common corruptions such as blur, noise, and adversarial examples \cite{GUO2023109308, 10316046, Ding_Neurocomputing_2025}. Robustness enhancement methods, such as robust optimization \cite{ben1998robust, GABREL2014471}, have emerged as influential tools for addressing data uncertainty \cite{Ding_Neurocomputing_2025, kebria2019robust}. These approaches leverage concepts from convexity and duality to derive feasible solutions for optimization problems. They have been successfully applied across various domains, including machine learning, to improve the robustness of deep neural networks \cite{nuhu2025validationstrategydeeplearning, kebria2019deep, NEURIPS2021_ea4c796c}.

Inspired by robust optimization, the authors in \cite{xu2012robustness} demonstrated that robust algorithms generalize effectively to unseen data across different models, including deep learning architectures. This establishes robustness as an alternative perspective for studying generalization \cite{ROHLFS_Neurocomputing_2025, 8919696, kawaguchi2022generalization, LIU_Neurocomputing_2025, BALLESTER_Neurocomputing_2025}. According to Definition  \eqref{def:robustness}, a learning algorithm $\mathcal{A}$ is considered robust if the loss $\ell$ of its output hypothesis $\mathcal{A_S}$ (a model returned by a learning algorithm $\mathcal{A}$ when trained on training dataset $\mathcal{S}$) remains similar for nearby input samples. Building on this notion, prior work has shown that the generalization error of $\mathcal{A}_{\mathcal{S}}$ can be upper bounded by two key components. The first is a robustness term, $\epsilon(\mathcal{S})$ and the second is an uncertainty term, $\sqrt{K/n}$. Where, $K$ represents the number of disjoint clusters that partition the input space, and $n$ is the training sample size. More recently, authors in \cite{kawaguchi2022robustness} refined this uncertainty term, thereby improving the practicality of robustness-based generalization bounds.

Based on these developments, the robustness-based generalization framework has attracted significant attention in the machine learning community \cite{ Ding_Neurocomputing_2025, kawaguchi2022robustness, 7934087, kebria2018deep}. This interest stems from the intuitive nature of the framework and the theoretically grounded definition of the robustness term $\epsilon(\mathcal{S})$ \cite{kawaguchi2022robustness}. Nevertheless, under the 0–1 loss, even a single misclassified sample can result in $\epsilon(\mathcal{S}) = 1$, reducing the bound to that of the worst-case model and rendering it vacuous \cite{than2025gentle}. Although the supremum operator in the definition is designed to capture worst-case behavior, several theoretical studies impose restrictions to keep $\epsilon(\mathcal{S}) < 1$ \cite{BELLET2015259, 7811216}. However, such constraints are difficult to justify in practice, as even high-performing models may misclassify certain inputs. To address this issue, recent work proposes instance-aware modifications to the robustness definition to reduce the influence of outlier samples that disproportionately inflate $\epsilon(\mathcal{S})$ \cite{than2025gentle}. This adjustment retains worst-case sensitivity while producing more meaningful generalization guarantees.
\begin{figure*}
\centering
\subfloat[Original training data instances classified correctly as the true label "goldfish."]{\includegraphics[width=0.475\linewidth]{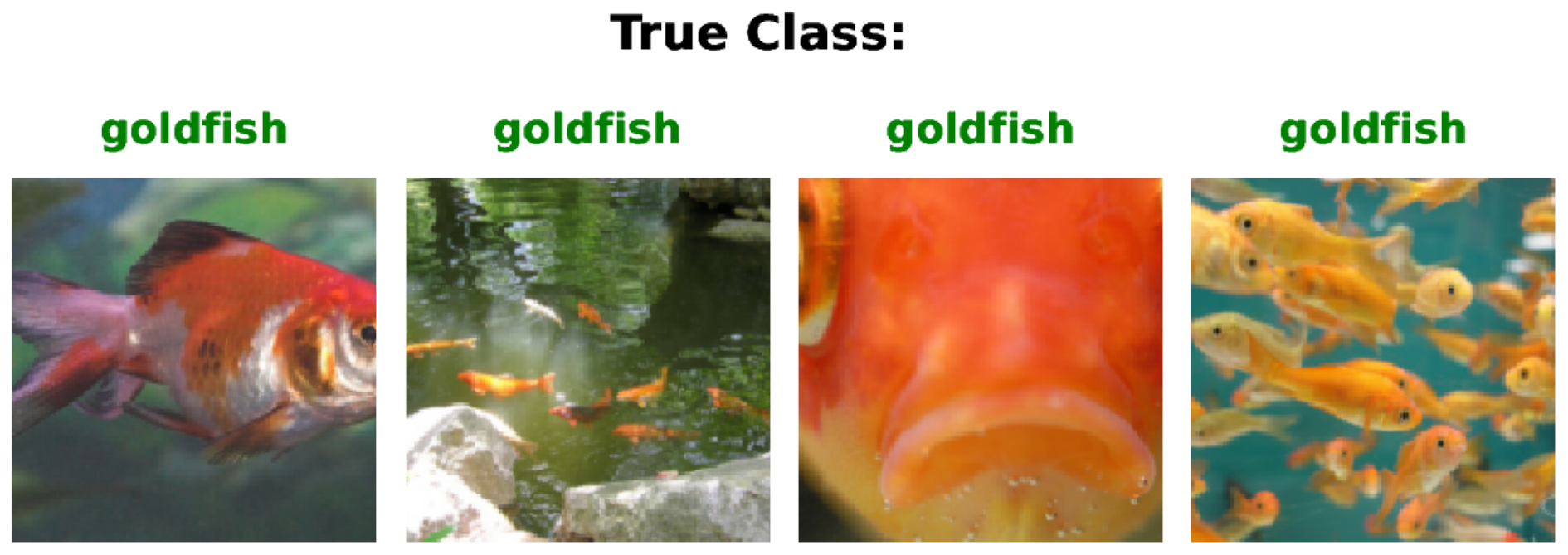}%
\label{fig:weakRobust-a}}
\hfil
\subfloat[A perturbed version of the data instances in (a) misclassified to belong to different classes.]{\includegraphics[width=0.475\linewidth]{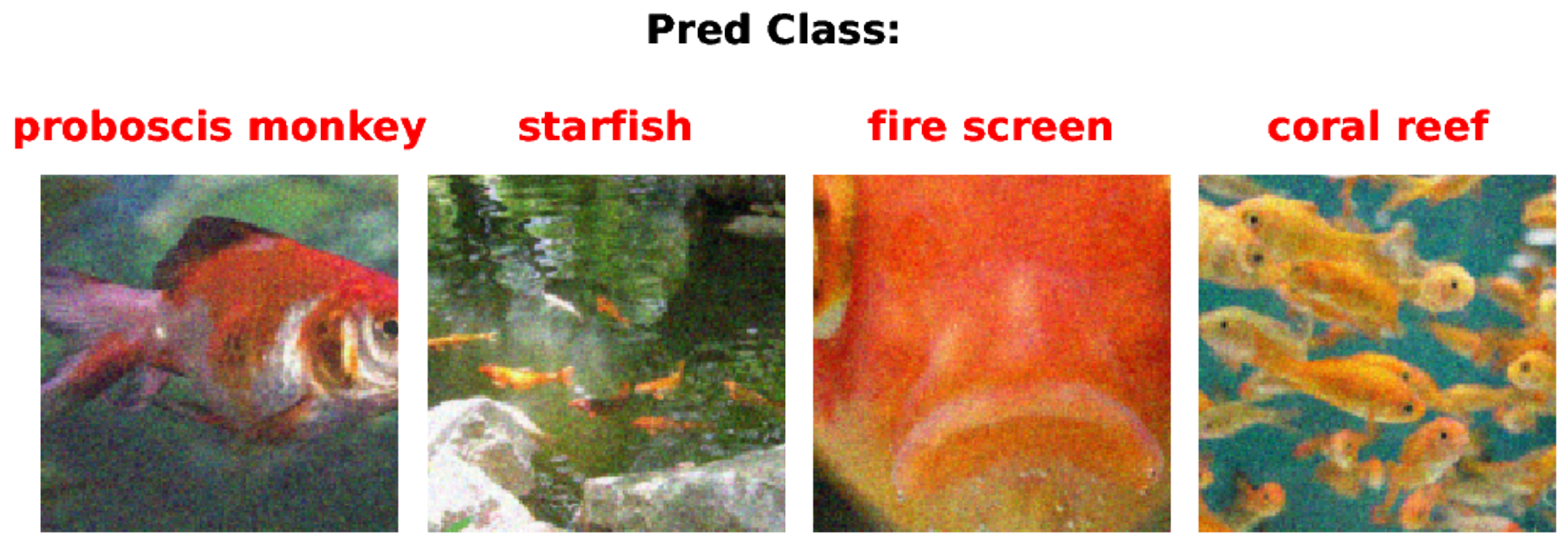}%
\label{fig:weakRobust-b}}
\caption{A trained AllConvNet model classifies the original instances (a) correctly as the truth class, goldfish, while their perturbed variants are misclassified. Data instances with the majority of their neighbors misclassified are considered weak robust samples.}
\label{fig-weakRobust-Samples}
\end{figure*}

Despite these advances, robustness-based generalization bounds remain limited when applied to specific trained models, especially in tasks such as model selection. Comparing models trained with different or stochastic algorithms remains challenging \cite{xu2012robustness}, highlighting the need for more model-sensitive generalization measures. In this context, authors in \cite{than2025gentle} introduce a localized robustness formulation that decomposes the global robustness term into cluster-specific components $\epsilon_j(h)$. Each $\epsilon_j(h)$ quantifies the maximum change in loss between two similar samples within the same local neighborhood (e.g., cluster $j$ and model $h$). This formulation avoids the unrealistic assumption of uniform robustness across the dataset and instead captures heterogeneous local behaviors. However, by scaling each cluster’s robustness contribution by the number of samples within it, their method implicitly assumes uniform influence from all instances in a cluster. In practice, stable samples in high-variance clusters can overshadow the effect of vulnerable ones, resulting in misleading robustness estimates. To address this, we refine the framework by explicitly identifying stable and unstable samples. We assign robustness contributions that reflect the stable and unstable samples behaviors. This adjustment ensures that the influence of highly variable or error-prone samples is appropriately captured in the generalization bound.

To formalize this idea, we define a sample as stable if the model classifies the original input and its perturbed counterparts correctly. A sample is labeled unstable when the misclassification rate among its perturbed neighbors exceeds a predefined threshold \cite{nuhu2025validationstrategydeeplearning}. As illustrated in Figure~\ref{fig-weakRobust-Samples}, the clean images in (a) are correctly labeled as "goldfish", while their perturbed versions in (b), though invisible, are misclassified. This exemplifies unstable behavior. These unstable samples often exhibit high loss variability and, under the robustness definition in \eqref{def:robustness}, can cause $\epsilon(\mathcal{S})$ to approach 1, producing vacuous generalization bounds. 
Figure~\ref{fig:samples_stability} illustrates that stable samples maintain relatively constant loss values across iterations, whereas unstable samples exhibit significant volatility. 
\begin{figure}
    \centering
    \includegraphics[width=0.85\linewidth]{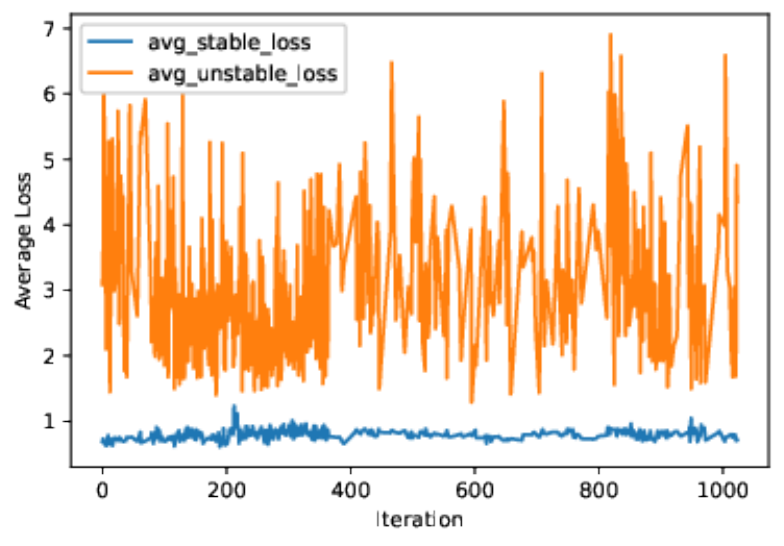}
    \caption{Evolution of the average unnormalized cross-entropy loss for stable and unstable samples on the ImageNet dataset using a ResNet-50 model. Unstable samples consistently exhibit higher and more variable losses across iterations, whereas stable samples maintain low, consistent losses. This distinct behavior highlights the effectiveness of unstable samples as proxies for assessing generalization through robustness-based bounds.}
    \label{fig:samples_stability}
\end{figure}
Using a global average robustness term for the entire dataset can therefore overestimate the generalization bound. The same issue arises when local cluster robustness is scaled without distinguishing between stable and unstable samples. This overestimation occurs because unstable samples contribute the most to the generalization error \cite{7934087}.
When their effect is combined with stable samples, it becomes masked. This masking leads to an overestimation of the generalization bound. As a result, the bound becomes inflated and does not reflect the true behavior of a model. Our framework addresses this limitation by explicitly isolating unstable samples and systematically accounting for their influence. By disentangling their impact from that of stable samples, it produces tighter and more reliable robustness-based generalization guarantees. To best of our knowledge, we are the first to use samples stability to provide empirical results on generalization guarantees measure.

Our contributions in this paper are as follows: 
\begin{itemize}
    \item Motivated by the observation in \cite{than2025gentle} that robustness-based generalization bounds remain vacuous for classification problems with overlapping classes, wwe provide evidence that this limitation is linked to structural aspects of the formulation. Unlike prior approaches that uniformly treat all samples across clusters, our framework differentiates between stable and unstable regions, yielding significantly tighter and more meaningful generalization estimates in practice.
    \item We propose a new class of model-dependent generalization bounds based on local model behavior across the input space. Our method separates stable and unstable clusters, allowing robustness contributions to reflect actual variations in the data. It does not require robustness assumptions about the model and yields bounds that are provably tighter than existing robustness-based approaches.
    \item We empirically evaluate our proposed bounds on real-world datasets using modern neural network architectures and show that they correlate more strongly with actual model performance than existing baseline bounds. This improved alignment makes them a more reliable tool for model selection and comparative performance evaluation.
\end{itemize}

The rest of the paper is structured as follows. Section~\ref{section:2} presents the key definitions and existing bounds that form the foundation for our proposed bounds. Section~\ref{section:3} presents the proposed bounds along with their implementation details.  Section~\ref{section:4} summarizes the proposed framework for our theorems implementation. Section~\ref{section:5} describes the experimental setup and compares our results with existing bounds. Finally, Section~\ref{section:6} concludes the paper and outlines directions for future research. Proofs of our bounds are provided in~\ref{appendx:A}, while additional results are presented in~\ref{Appendix_B}.
\begin{figure}
\begin{center}
\includegraphics[width=0.85\linewidth]{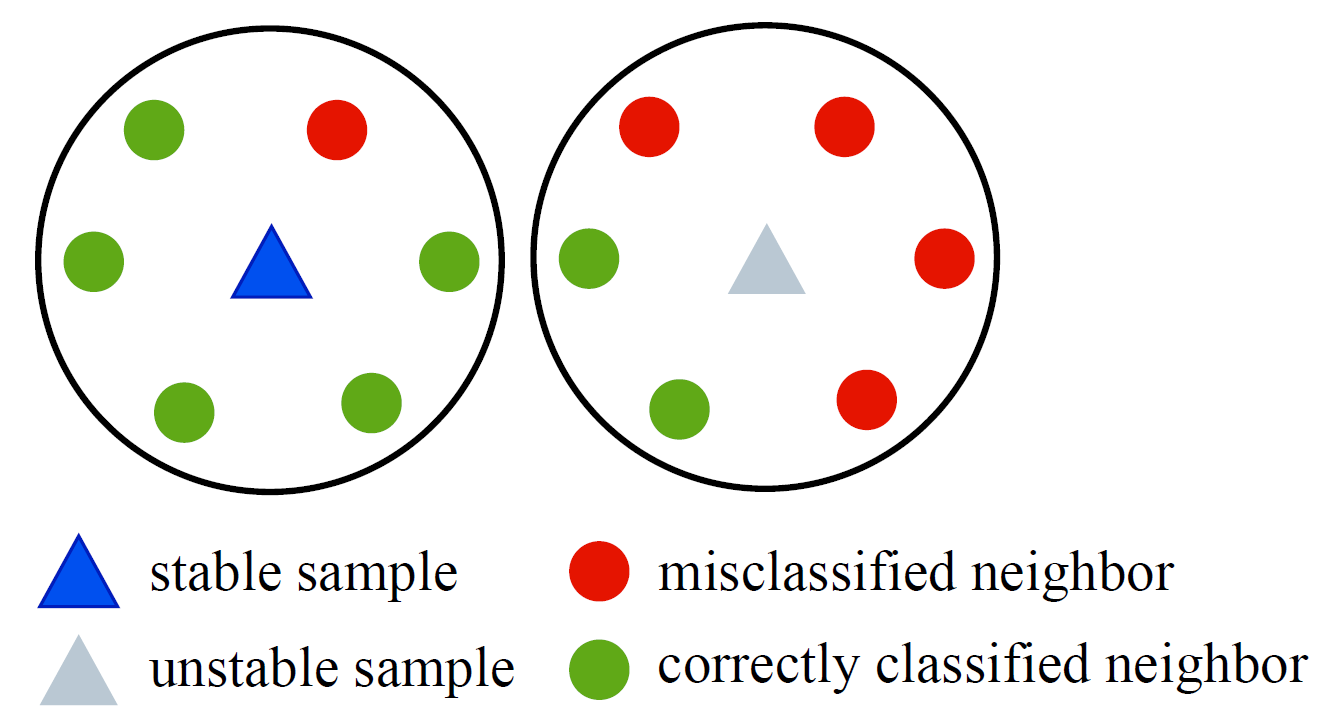}
\caption{Data instances and the corresponding neighbors. The blue and gray triangles show the original instances; the green and red circles are the neighbors generated from perturbing an original instance.}
\label{fig-ngbs}
\end{center}
\end{figure}

\section{Preliminaries}\label{section:2}
This section introduces key terminologies and notations. Then, it covers  foundational definitions and theorems on robustness-based generalization bounds. Finally, it concludes with factors behind vacuousness of these bounds and prepares the ground for our proposed theorems. 

\subsection{Terminology Definitions} We discuss the key terminologies used.\\

\textbf{Neighbors:} \textit{Samples obtained by perturbing original instances from a given dataset.} As illustrated in Figure~\ref{fig-ngbs}, the blue and gray triangles denote original samples, while the green and red circles represent their corresponding neighbors, generated through random perturbations. Neighbors may also be generated using natural variations or adversarial perturbation techniques.\\

\textbf{Unstable sample:} A data instance is classified as an unstable sample if its local neighborhood exhibits high sensitivity under perturbations, i.e., a significant proportion of its perturbed neighbors are misclassified by the model. As illustrated in Figure~\ref{fig-weakRobust-Samples}, the original inputs in (a) are correctly classified, while their perturbed counterparts in (b) are frequently misclassified, indicating weak robustness.

Formally, this behavior reflects regions where the loss function varies sharply under small input perturbations. Such samples are closely associated with small classification margins or brittle decision boundaries, as established in prior works on margin-based stability \cite{xu2012robustness, NEURIPS2024_29753d93, nuhu2025validationstrategydeeplearning}. These works show that samples with low margins contribute disproportionately to instability and robustness degradation. Figure~\ref{fig:samples_stability} further illustrates this distinction: stable samples maintain relatively constant loss values across iterations, whereas unstable samples exhibit significant volatility.

\subsection{Notations} 
Let $\mathcal{Z}$ represents the sample space, and $\mathcal{S} \subset \mathcal{Z}$ denotes a training set consisting of $n$ samples $(z_1, \dots, z_n)$, with $n=|\mathcal{S}|$ and $Z_m \subset \mathcal{Z}$ represents an arbitrary test dataset ($Z_m \cap \mathcal{S} = \emptyset $) drawn from $\mathcal{Z}$. $z_i$ denotes an arbitrary sample drawn from $\mathcal{S}$ while $z$ denotes an arbitrary sample drawn from $\mathcal{Z}$ i.e $z \in \mathcal{Z} \setminus \mathcal{S}$.
We define $\hat{\mathcal{S}}$ as the stable samples, and $\widetilde{\mathcal{S}} = \mathcal{S} \setminus \hat{\mathcal{S}}$ as the unstable samples in the training set $\mathcal{S}$. The size of $\hat{\mathcal{S}}$ is denoted as $\hat{n}(\mathcal{A_S})$. Similarly, the size of $\widetilde{\mathcal{S}}$ is denoted as $ \widetilde{n}(\mathcal{A_S})$.  For a given $K \in \mathbb{N}$ with $K \geq 1$, we denote $[K] = \{1, \dots, K\}$ as the number of clusters. 

Let $\beta(\mathcal{Z}):= \bigcup_{k=1}^{K} C_k$ be a partition of $\mathcal{Z}$ into $K$ disjoint nonempty subsets. We assume that the input space $\mathcal{Z} \subseteq \mathbb{R}^d$ is a metric space, where local neighborhoods can be defined via small perturbations under a chosen norm. The notation $||\cdot||_p$ refers to the standard $p$-norm of a vector.

Consider a learning problem defined by a hypothesis class $\mathcal{H}$ and a loss function $\ell : \mathcal{H} \times \mathcal{Z} \to \mathbb{R}$. Given a distribution $\mathbb{P}$ over $\mathcal{Z}$, the generalization capability of $\mathcal{A}_{\mathcal{S}} \in \mathcal{H}$, a model returned by learning algorithm $\mathcal{A}$ on training set $\mathcal{S}$, is measured by its expected risk 
\[
\mathcal{R}_Z(\mathcal{A_S}) = \mathbb{E}_{z \sim \mathbb{P}}[\ell(\mathcal{A_S}, z)].
\]  
Since this expectation cannot be computed directly, the empirical risk  
\[
\hat{\mathcal{R}}_{\mathcal{S}}(\mathcal{A_S}) = \frac{1}{n} \sum_{i=1}^{n} \ell(\mathcal{A_S}, s_i)
\]  
is typically used to approximate it \cite{7934087, CHENG_Neurocomputing_2025}. The expected risk is then upper bounded by the empirical risk plus additional terms, as given:  

\[
    \mathcal{R}_Z(\mathcal{A_S})
    \;\leq\;
    \hat{\mathcal{R}}_{\mathcal{S}}(\mathcal{A_S})
    + \mathcal{O}\!\left(\sqrt{\tfrac{C}{n}}\right)
\]
where C denotes the complexity of a model.

\subsection{Robustness-based generalization bounds}
Previous studies in \cite{xu2012robustness, kawaguchi2022robustness} used algorithmic robustness to bound the expected risk $\mathcal{R}_Z(\mathcal{A_S})$.\\

\begin{definition}[Algorithmic Robustness] \label{def:robustness}
A learning algorithm \( \mathcal{A} \) is said to be \((K, \epsilon(\cdot))\)-robust, for some \( K \in \mathbb{N} \) and a function \( \epsilon : \mathcal{Z}^n \to \mathbb{R} \), if the sample space \( \mathcal{Z} \) can be partitioned into \( K \) disjoint subsets \( \{ C_k \}_{k=1}^K \) such that the following holds:

For all training samples \( \mathcal{S} \in \mathcal{Z}^n \), and \(  \forall s \in \mathcal{S} \),  \( \forall z \in \mathcal{Z} \), and \( k \in [K] \),  
if \( s, z \in C_k \), then  
\[
\left| \ell(\mathcal{A}_\mathcal{S}, s) - \ell(\mathcal{A}_\mathcal{S}, z) \right| \leq \epsilon(\mathcal{S}).
\]
\end{definition}

Intuitively, algorithmic robustness means that the learned algorithm $\mathcal{A}_S$ behaves consistently for samples that belong to the same partition of the sample space as the training data, as measured by the loss function $\ell$ \cite{kawaguchi2022robustness}. In other words, samples that are similar to the training data incur comparable loss values. This provides intuition for why robustness can support generalization to nearby samples. To formalize the relation between robustness and generalization of a trained model, studies in \cite{xu2012robustness, kawaguchi2022robustness} incorporate the robustness properties of a learning algorithm in \eqref{def:robustness} and proposed the following theorems.
\begin{theorem}[Xu \& Mannor \cite{xu2012robustness}] \label{thm:xu-generalization}
Let $\ell(h, z)$ be a non-negative loss function upper bounded by $M$, i.e., $\ell(h, z) \leq M$ for all $h \in \mathcal{H}$ and $z \in \mathcal{Z}$. If the learning algorithm $\mathcal{A}$ is $(K, \epsilon(\cdot))$-robust, then for any $\delta > 0$, 
\[u_{1}(K, \mathcal{S}, \delta) = M\sqrt{\frac{2K \ln 2 + 2\ln(1/\delta)}{n}}\]
with probability of at least $1 - \delta$, and $n$ samples $S = \{z_1, \dots, z_n\}$ drawn from an iid (independent and identically distributed), the following holds:
\begin{equation}
\begin{split}
   \mathcal{R}_Z(\mathcal{A_S}) \leq \hat{\mathcal{R}}_{\mathcal{S}}(\mathcal{A_S}) + \epsilon(\mathcal{S})
   + u_{1}(K, \mathcal{S}, \delta) 
\end{split}
\label{eqn_2r} 
\end{equation}
\end{theorem}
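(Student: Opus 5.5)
The plan is the classical Xu--Mannor argument: split the generalization gap into a term controlled by robustness and a term controlled by multinomial concentration of the cluster counts. Let $N_k = |\{i : z_i \in C_k\}|$ be the number of training samples in cluster $C_k$, so $(N_1,\dots,N_K)$ is multinomial with parameters $n$ and $(\mu(C_1),\dots,\mu(C_K))$, where $\mu(C_k)=\mathbb{P}(z\in C_k)$. The decomposition is
\begin{align*}
\mathcal{R}_Z(\mathcal{A_S}) - \hat{\mathcal{R}}_{\mathcal{S}}(\mathcal{A_S})
&= \sum_{k=1}^K \mathbb{E}[\ell(\mathcal{A_S},z)\mid z\in C_k]\Big(\mu(C_k)-\tfrac{N_k}{n}\Big) \\
&\quad + \frac{1}{n}\sum_{k=1}^K\sum_{i:\, z_i\in C_k}\Big(\mathbb{E}[\ell(\mathcal{A_S},z)\mid z\in C_k]-\ell(\mathcal{A_S},z_i)\Big).
\end{align*}

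\textbf{Robustness term.} In the second sum, each $z_i$ and $z$ lie in the same cluster $C_k$. Definition~\ref{def:robustness} then gives $|\ell(\mathcal{A_S},z_i)-\ell(\mathcal{A_S},z)|\le\epsilon(\mathcal{S})$ pointwise, and hence also after taking the conditional expectation over $z$. Averaging over the $n$ training samples, the second sum is at most $\epsilon(\mathcal{S})$.

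\textbf{Concentration term.} Since $0\le\ell\le M$, the first sum is at most $M\sum_k|N_k/n-\mu(C_k)|$. The key point is that the partition is fixed before seeing $\mathcal{S}$, so this quantity does not depend on $\mathcal{A_S}$. That is why the data-dependent choice of hypothesis causes no uniform-convergence issue.

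We then apply the Bretagnolle--Huber--Carol inequality:
\[
\Pr\Big(\sum_k |N_k/n-\mu(C_k)|\ge\lambda\Big)\le 2^K e^{-n\lambda^2/2}.
\]
Setting the right-hand side equal to $\delta$ gives $\lambda=\sqrt{(2K\ln2+2\ln(1/\delta))/n}$. Multiplying by $M$ yields exactly $u_1(K,\mathcal{S},\delta)$, and combining the two terms gives \eqref{eqn_2r} with probability at least $1-\delta$.

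\textbf{Main obstacle.} The delicate step is justifying the multinomial deviation bound, and I would do it in two stages. First, write the $\ell_1$ deviation as a maximum over the $2^K$ subsets $H\subseteq[K]$:
\[
\sum_k |N_k/n-\mu(C_k)| = 2\max_{H\subseteq[K]}\Big(\sum_{k\in H}\tfrac{N_k}{n}-\mu\big(\textstyle\bigcup_{k\in H} C_k\big)\Big).
\]
Second, for each fixed $H$, apply Hoeffding's inequality to the indicator average $\frac1n\sum_i \mathbf{1}[z_i\in\bigcup_{k\in H}C_k]$. This gives a tail of $e^{-2n(\lambda/2)^2}=e^{-n\lambda^2/2}$, and a union bound over the $2^K$ subsets completes the inequality.

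A secondary subtlety is clusters with $\mu(C_k)=0$. The conditional expectation is undefined there, but those terms have zero weight in the first sum and almost surely contain no training samples, so they can be dropped.
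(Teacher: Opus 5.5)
Your proposal is correct and is essentially the original Xu--Mannor argument. The paper only quotes this theorem and does not reprove it, but its appendix uses the same two-term decomposition (cluster-frequency deviation plus within-cluster robustness) for Theorem~\ref{thm:cluster_bound}, bounding the multinomial term with Kawaguchi et al.'s concentration result instead of the Bretagnolle--Huber--Carol inequality. You correctly derive that inequality via Hoeffding and a union bound over the $2^K$ subsets, and your calibration $2^K e^{-n\lambda^2/2}=\delta$ recovers $u_1$ exactly.
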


It can be seen from Theorem~\ref{thm:xu-generalization} that the expected risk $\mathcal{R}_Z(\mathcal{A_S})$ is bounded by the empirical risk $\hat{\mathcal{R}}_{\mathcal{S}}(\mathcal{A_S})$ and some additional terms. The first term of these additional terms capture the robustness properties of the learned model $\mathcal{A_S}$. The last term of the additional terms in bound \eqref{eqn_2r} scales with $\sqrt{K}$ and has been the main disadvantage of this bound \cite{kawaguchi2022robustness}. Through the concentration of multinomial random variables, authors in \cite{kawaguchi2022robustness} proposed a theorem that reduces the $K$ dependence in Theorem~\ref{thm:xu-generalization}. The theorem is as follows:\\

\begin{theorem}[{Kawaguchi et al., \cite{kawaguchi2022robustness}}] \label{thm:kawaguchi}
Following the same assumptions and notations as in Theorem~\ref{thm:xu-generalization}, for any\\
$\delta > 0$, denote
\begin{align*}
u_{2}(K, \mathcal{S}, \delta) 
&= \phi(\mathcal{A_S}) \Bigg(
      (\sqrt{2}+1)\sqrt{\frac{|T_S| \ln(2K/\delta)}{n}}\\ 
      &+ \frac{2|T_S| \ln(2K/\delta)}{n} 
   \Bigg)
\end{align*}

with probability at least $1 - \delta$, and $n$ samples $S = \{z_1, \dots, z_n\}$ drawn from an iid, the following generalization bound holds:
\begin{equation}
\begin{split}
    \mathcal{R}_Z(\mathcal{A_S}) \leq \hat{\mathcal{R}}_{\mathcal{S}}(\mathcal{A_S}) + \epsilon(\mathcal{S}) 
    + u_{2}(K, \mathcal{S}, \delta)
\end{split}
\label{eqn_2r_1}
\end{equation}
\end{theorem}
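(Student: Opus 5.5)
The plan is to follow Kawaguchi et al.\ and split the generalization gap into a robustness part and a concentration part, working cluster by cluster. Let $\mathcal{I}_k=\{i\in[n]: z_i\in C_k\}$ and $p_k=\mathbb{P}(z\in C_k)$. Let $T_S\subseteq[K]$ be the set of clusters that contain at least one training sample. Write $\phi(\mathcal{A_S})$ for a bound on the loss over the relevant clusters; $\phi(\mathcal{A_S})\le M$ suffices. We decompose
\begin{align*}
\mathcal{R}_Z(\mathcal{A_S})-\hat{\mathcal{R}}_{\mathcal{S}}(\mathcal{A_S})
&=\sum_{k=1}^K \mathbb{E}[\ell(\mathcal{A_S},z)\mid z\in C_k]\Big(p_k-\tfrac{|\mathcal{I}_k|}{n}\Big)\\
&\quad+\frac1n\sum_{k=1}^K\sum_{i\in\mathcal{I}_k}\Big(\mathbb{E}[\ell(\mathcal{A_S},z)\mid z\in C_k]-\ell(\mathcal{A_S},z_i)\Big).
\end{align*}
Definition~\ref{def:robustness} bounds each inner difference in the second sum by $\epsilon(\mathcal{S})$, because $z_i$ and $z$ lie in the same $C_k$. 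The second sum is therefore at most $\epsilon(\mathcal{S})$.

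For the first sum, I would treat $(|\mathcal{I}_1|,\dots,|\mathcal{I}_K|)$ as a multinomial vector with parameters $(n;p_1,\dots,p_K)$. The key step is a per-coordinate Bernstein or Chernoff bound on $p_k-|\mathcal{I}_k|/n$ with confidence $\delta/K$ each, followed by a union bound over the $K$ coordinates. This yields $\ln(2K/\delta)$ inside the bound instead of the factor $K$ that appears in Theorem~\ref{thm:xu-generalization}. Concretely, with probability at least $1-\delta$, for all $k$ simultaneously,
\[
p_k-\frac{|\mathcal{I}_k|}{n}\le \sqrt{\frac{2p_k\ln(2K/\delta)}{n}}+\frac{\ln(2K/\delta)}{n},
\]
and the same holds with the roles exchanged.

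This inequality is implicit in $p_k$. I would solve it as a quadratic in $\sqrt{p_k}$ to obtain $\sqrt{p_k}\le\sqrt{|\mathcal{I}_k|/n}+c\sqrt{\ln(2K/\delta)/n}$, and substitute back. That gives a bound of the form
\[
p_k-\frac{|\mathcal{I}_k|}{n}\le(\sqrt2+1)\sqrt{\frac{|\mathcal{I}_k|\ln(2K/\delta)}{n^2}}+\frac{2\ln(2K/\delta)}{n}.
\]
Multiplying by the conditional loss, which is at most $\phi(\mathcal{A_S})$, and summing over $k$ comes next. Clusters with $|\mathcal{I}_k|=0$ contribute only through $p_k$; they are handled either by the same inequality or by a separate probability mass argument, which is why only $T_S$ appears.

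The sum over $k\in T_S$ is then controlled by Cauchy--Schwarz:
\[
\sum_{k\in T_S}\sqrt{|\mathcal{I}_k|}\le\sqrt{|T_S|\,n}.
\]
This produces $(\sqrt2+1)\sqrt{|T_S|\ln(2K/\delta)/n}$. The additive pieces sum to $2|T_S|\ln(2K/\delta)/n$. Combining both parts gives \eqref{eqn_2r_1}.

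The main obstacle is the bookkeeping in the concentration step. The $\sqrt{p_k}$ self-bounding inversion has to produce exactly the constants $\sqrt2+1$ and $2$, the empty clusters must be handled so that only $|T_S|$ appears, and the terms with negative sign (where $|\mathcal{I}_k|/n>p_k$) must be treated correctly given that the loss is nonnegative. I would try dropping the negative terms first, since the loss is nonnegative.
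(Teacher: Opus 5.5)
Your decomposition of the gap and the bound of the second sum by $\epsilon(\mathcal{S})$ are correct. The paper does not prove this theorem; it imports it from Kawaguchi et al. In its appendix the paper uses the same decomposition \eqref{eq:lemma8} and per-cluster robustness bound, and delegates the multinomial term to them as a black box.

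\textbf{The gap: empty clusters.} The part you supply yourself, the concentration step, fails there. For $k\notin T_S$ the summand is $\alpha_k p_k\ge 0$. Your per-coordinate inequality with $|\mathcal{I}_k|=0$ gives only $p_k\le 2\ln(2K/\delta)/n$. Summing over $k\notin T_S$ leaves a term of order $(K-|T_S|)\ln(2K/\delta)/n$, which is not of the form of $u_2$: that bound may depend on $K$ only through $\ln K$.

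The ``separate probability mass argument'' is the missing idea, and it needs exactly the negative terms you propose to drop. Write $q_k:=|\mathcal{I}_k|/n$. Since $\sum_k p_k=\sum_k q_k=1$, the missing mass satisfies $\sum_{k\notin T_S}p_k=\sum_{k\in T_S}(q_k-p_k)$. So it is paid for by the overshoots on occupied clusters.

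The cleanest way to use this is to shift by $\phi:=\phi(\mathcal{A_S})$. Since $0\le\alpha_k\le\phi$ and $q_k=0$ off $T_S$,
\[
\sum_{k=1}^K\alpha_k(p_k-q_k)=\sum_{k=1}^K(\phi-\alpha_k)(q_k-p_k)\le\phi\sum_{k\in T_S}(q_k-p_k)^+ .
\]
Now apply the upper-tail Chernoff bound $\Pr(q_k\ge p_k+t)\le\exp\bigl(-nt^2/(2p_k+t)\bigr)$ with a union bound over $k$. Using $2p_k+t=p_k+q_k\le 2q_k$, this gives, with probability at least $1-\delta$, $(q_k-p_k)^+\le\sqrt{2q_k\ln(K/\delta)/n}$ for all $k$. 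Cauchy--Schwarz then gives $\phi\sqrt{2|T_S|\ln(K/\delta)/n}$, which is already below $u_2(K,\mathcal{S},\delta)$. Only one tail is needed, and no $\sqrt{p_k}$-inversion.

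\textbf{The constants.} Separately, the per-coordinate bound you state does not follow from your starting inequality. Put $L=\ln(2K/\delta)/n$. Inverting $p_k-q_k\le\sqrt{2p_kL}+L$ yields $p_k-q_k\le\sqrt{2q_kL}+(2+\sqrt3)L$, and this is sharp at $q_k=0$. The additive constant $2$ comes only from the pure multiplicative lower tail $p_k-q_k\le\sqrt{2p_kL}$. In either case the coefficient of $\sqrt{q_kL}$ is $\sqrt2$, never $\sqrt2+1$.

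The coefficients in $\mathcal{Q}_1$ of $u_3$ show where the pieces come from. The $\sqrt2$ attaches to the lower deviations on occupied clusters, and the extra $1$ attaches to the missing-mass term $\alpha_{\mathcal{T}_S^c}$. So the $(\sqrt2+1)$ is not the output of a one-coordinate inversion. Also, bounding the lower deviations and the missing mass separately, each with a $\sqrt2$ Chernoff constant, would give $2\sqrt2>\sqrt2+1$. The shift above avoids this double counting.
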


\begin{align*}
\text{where } \phi(A_S) &:= \max_{z \in \mathcal{Z}} \left\{\ell(\mathcal{A_S}, z) \right\}, \\
T_S &:= \left\{ k \in [K] \;\middle|\; \left| \mathcal{I}_k^S \right| \geq 1 \right\},\\
\mathcal{I}_k^S &:= \{ i \in [n] : z_i \in C_k \} 
\end{align*}
Theorem \ref{thm:kawaguchi} significantly improves over the previous bound \eqref{eqn_2r}. It has a far less dependency on $K$, as $\sqrt{K}$ is reduced to $\sqrt{\ln K}$. $\phi(\mathcal{A_S})$ replaces the maximum over the entire model family (hypothesis space) with a single hypothesis returned by the algorithm ($\mathcal{A_S}$). Compared with $u_{1}(K, \mathcal{S}, \delta)$, the uncertainty term $u_{2}(K, \mathcal{S}, \delta)$ becomes significantly smaller. Although this is a significant improvement, authors in \cite{kawaguchi2022robustness} further showed that $u_{2}(K, \mathcal{S}, \delta)$ can be further improved with 

\[u_{3}(K, \mathcal{S}, \delta) = \mathcal{Q}_1\sqrt{\frac{\ln{(2K/\delta)}}{n}} + \frac{2\mathcal{Q}_2\ln{(2K/\delta)}}{n}\]\\ 
where 
\begin{align*}
\mathcal{Q}_1 &:= \sum_{k \in T_S} \left( \alpha_{\mathcal{T}_{S}^c}(A_S) + \sqrt{2} \, \alpha_k(A_S) \right) \sqrt{\frac{|\mathcal{I}_k^S|}{n}}, \\
\mathcal{Q}_2 &:= \alpha_{\mathcal{T}_{S}^c}(A_S) \cdot |T_S| + \sum_{k \in T_S} \alpha_k(A_S),
\end{align*}

\begin{align*}
T_S &:= \left\{ k \in [K] : |\mathcal{I}_k^S| \geq 1 \right\}, \\
\mathcal{I}_k^S &:= \left\{ i \in [n] : z_i \in C_k \right\}, \\
\alpha_k(h) &:= \mathbb{E}_{z} \left[ \ell(h, z) \mid z \in C_k \right], \\
\alpha_{\mathcal{T}_{S}^c}(A_S) &:= \max_{k \in \mathcal{T}_{S}^c} \alpha_k(A_S), \\
\mathcal{T}_{S}^c &:= [K] \setminus T_S.
\end{align*}

Although the uncertainty term $u_{1}(K, \mathcal{S}, \delta)$ has improved to $u_{3}(K, \mathcal{S}, \delta)$, the robustness term $\epsilon(\mathcal{S})$ remains the same across different proposed bounds \cite{than2025gentle}. As a result, there are still some issues with these bounds, and we discuss this matter in the following.
\subsection{Vacuousness and its main causes}
The robustness-based generalization bounds can become vacuous, primarily due to the conservative nature of the robustness condition defined in Definition~\ref{def:robustness}. This issue was first identified by the authors in \cite{than2025gentle}. Consider a model $\mathcal{A_S} \in \mathcal{H}$ returned by a learning algorithm $\mathcal{A}$. Definition~\ref{def:robustness} implies that 
\[\epsilon(\mathcal{S}) \geq \max_{j \in T_S} \epsilon_j(\mathcal{A_S})\]  where 
\[\epsilon_j(\mathcal{A_S}) = \max_{s, z \in C_j}|\ell(\mathcal{A_S}, z)-\ell(\mathcal{A_S}, s)|\] 
Even for highly accurate models, bounds \eqref{eqn_2r} and \eqref{eqn_2r_1} can still become vacuous. In practice, it is possible to have  incorrect predictions from a highly accurate model. Therefore, for losses normalized to [0,1], an incorrect prediction can result in $\epsilon(\mathcal{S})=1$ and thus making the bounds vacuous. In a nutshell, Definition~\ref{def:robustness} requires the following specific operations: 
\begin{itemize}
    \item Supremum operation: The robustness term $\epsilon(\mathcal{S})$ is computed by taking the supremum of local robustness levels ($\epsilon_j$) across all clusters of the input space that contains some training examples. This means that even a single unstable region, where the local robustness is poor (e.g., $\epsilon_j=1$ for $0-1$ loss, indicating a region where misclassification is possible). This can inflate the global robustness measure and in practice, it is common for models to make at least a few mistakes. So, taking the supremum of local robustness levels will almost always result in vacuous bounds.
    \item Stochasticity Inclusion Challenge: In practice, most learning algorithms $\mathcal{A}$ are stochastic, meaning that different runs, given the same training set $\mathcal{S}$ and hyperparameters, can yield different trained models. Consequently, evaluating the robustness metric $\epsilon(\mathcal{S})$ must account for this randomness. A more realistic formulation is:
    \[
    \epsilon(\mathcal{S}) \geq \sup_{\zeta, j} \epsilon_j(\mathcal{A}_{\mathcal{S}}(\zeta)),
    \]
    where $\zeta$ denotes the source of stochasticity and $\epsilon_j$ measures the robustness of the $j$th cluster. Fully characterizing $\epsilon(\mathcal{S})$ would therefore require considering all stochastic variations of $\mathcal{A}$, which is computationally infeasible in most settings. Moreover, some stochastic outcomes can produce models with substantially reduced robustness. In such cases, assuming a single global robustness term $\epsilon(\mathcal{S})$ for the entire dataset, as in existing approaches, can render the bounds vacuous \cite{than2025gentle}. These challenges highlight the need for a more tractable and reliable approach to evaluate robustness under algorithmic stochasticity.
\end{itemize}
To mitigate the vacuousness issues that stem from supremum operation and stochasticity inclusion, the authors in \cite{than2025gentle} proposed the following theorem:
\begin{theorem}[{Khoat et al., \cite{than2025gentle}} Theorem 4] \label{thm:khoat}
Consider a model $\mathbf{h} \in \mathcal{H}$ learned from $\mathcal{S}$ and a bounded loss $\ell$. For each $j \in [K]$, let $\epsilon_j(h) = \max_{s, z \in C_j}|\ell (h, z)-\ell (h, s)|$.  For any $\delta > 0$, with probability at least $1 - \delta$ of $n$ samples $S = \{z_1, \dots, z_n\}$ drawn from an iid, the following generalization bound holds:
\begin{equation}
\begin{split}
    \mathcal{R}_Z(\mathbf{h}) \leq \hat{\mathcal{R}}_{\mathcal{S}}(\mathbf{h}) + \sum_{j \in T_S} \frac{n_j}{n} \epsilon_j(\mathbf{h}) + u_{3}(K, \mathcal{S}, \delta)\\
\end{split}
\label{eqn_2r_2}
\end{equation}
\end{theorem}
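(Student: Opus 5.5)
The plan is to follow the decomposition used by Kawaguchi et al.\ for Theorem~\ref{thm:kawaguchi}: split the generalization gap into a concentration part and a local-robustness part, and handle the robustness part cluster by cluster. Write $n_j = |\mathcal{I}_j^S|$ and $p_j = \mathbb{P}(z \in C_j)$. Since $\{C_j\}_{j=1}^K$ partitions $\mathcal{Z}$,
\begin{align*}
\mathcal{R}_Z(\mathbf{h}) - \hat{\mathcal{R}}_{\mathcal{S}}(\mathbf{h})
&= \sum_{j=1}^K p_j \alpha_j(\mathbf{h}) - \frac{1}{n}\sum_{j\in T_S}\sum_{i\in \mathcal{I}_j^S} \ell(\mathbf{h}, z_i) \\
&= \Big(\sum_{j=1}^K p_j \alpha_j(\mathbf{h}) - \sum_{j\in T_S} \frac{n_j}{n}\alpha_j(\mathbf{h})\Big)
 + \sum_{j\in T_S}\frac{1}{n}\sum_{i\in\mathcal{I}_j^S}\big(\alpha_j(\mathbf{h}) - \ell(\mathbf{h},z_i)\big).
\end{align*}
The first bracket depends only on how the cluster proportions $n_j/n$ deviate from $p_j$. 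The second bracket is a within-cluster discrepancy.

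\textbf{Robustness term.} For $i\in\mathcal{I}_j^S$, the quantity $\alpha_j(\mathbf{h})$ is an average of $\ell(\mathbf{h},z)$ over $z\in C_j$. Each such $z$ satisfies $|\ell(\mathbf{h},z)-\ell(\mathbf{h},z_i)|\le \epsilon_j(\mathbf{h})$ by the definition of $\epsilon_j$. Hence $\alpha_j(\mathbf{h})-\ell(\mathbf{h},z_i)\le\epsilon_j(\mathbf{h})$. Summing over $i\in\mathcal{I}_j^S$ gives the contribution $\frac{n_j}{n}\epsilon_j(\mathbf{h})$ for each $j\in T_S$, which is exactly the middle term of \eqref{eqn_2r_2}. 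This step is deterministic and requires no robustness assumption on the algorithm, only the local quantities $\epsilon_j(\mathbf{h})$.

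\textbf{Concentration term.} The first bracket equals $\sum_{j}(p_j - n_j/n)\alpha_j(\mathbf{h})$, with $n_j=0$ for $j\notin T_S$. Bounding it by $u_3(K,\mathcal{S},\delta)$ is exactly the multinomial concentration argument behind the refined bound of Kawaguchi et al. That argument uses Bernstein-type bounds on each $p_j - n_j/n$ with a union bound over $K$ clusters, which produces the $\ln(2K/\delta)$ factor. The terms are weighted by $\alpha_j$ for $j\in T_S$ and by $\alpha_{\mathcal{T}_S^c}$ for the unseen clusters, which yields $\mathcal{Q}_1$ and $\mathcal{Q}_2$. I would invoke that result as stated earlier, since it is already how $u_3$ was derived there. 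The argument splits the sum over $T_S$ and $\mathcal{T}_S^c$. It uses $\sum_{j\in\mathcal{T}_S^c}p_j = \sum_{j\in T_S}(n_j/n - p_j)$ to bound the unseen-cluster mass by deviations on $T_S$ times $\alpha_{\mathcal{T}_S^c}$, and it converts $\sqrt{p_j}$ into $\sqrt{n_j/n}$ plus a lower-order term.

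\textbf{Main obstacle.} The subtle point is the dependence of $\mathbf{h}$ on $\mathcal{S}$: the $\alpha_j(\mathbf{h})$ are data dependent, so the concentration cannot be applied to $\ell(\mathbf{h},\cdot)$ directly. The fix is to apply concentration only to the multinomial vector $(n_1,\dots,n_K)$. The partition is fixed independently of $\mathcal{S}$, and the resulting high-probability event holds uniformly over all weight vectors $(\alpha_j)$. Then plugging in the data-dependent $\alpha_j(\mathbf{h})$ is legitimate. Combining the two parts on this event, which has probability at least $1-\delta$, gives \eqref{eqn_2r_2}.
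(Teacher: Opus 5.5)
Your proposal is correct and follows essentially the argument the paper relies on. The paper only cites this result from Khoat et al., but its proof of Theorem~\ref{thm:cluster_bound} in Appendix~\ref{appendx:A} uses your scheme: the decomposition \eqref{eq:lemma8} into a multinomial-deviation term plus within-cluster terms, then the deterministic bound $\alpha_j(\mathbf{h})-\ell(\mathbf{h},z_i)\le\epsilon_j(\mathbf{h})$ for $z_i\in C_j$ giving $\sum_{j\in T_S}\frac{n_j}{n}\epsilon_j(\mathbf{h})$, and finally Theorem~3 of \cite{kawaguchi2022robustness} for the $u_{3}(K,\mathcal{S},\delta)$ term.

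Your remark that the concentration event depends only on $(n_1,\dots,n_K)$ and holds for all nonnegative weight vectors is the right way to handle the dependence of $\mathbf{h}$ on $\mathcal{S}$. The paper leaves this point implicit. It also requires a nonnegative loss, so that the data-dependent $\alpha_j(\mathbf{h})$ can be substituted.
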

This theorem shows that the expected loss of a model can be bounded by $\epsilon_j(\mathbf{h})$. $\epsilon_j(\mathbf{h})$ describes the local robustness of $\mathbf{h}$ at different regions and suggests that a model can generalize well when it is locally robust in those regions. This highlights that a model can have a small expected loss over the whole sample space if it has a small training loss and is locally robust. The global robustness term $\epsilon(\mathcal{S})$ being replaced by a finer quantity $\sum_{j \in T_S} \frac{n_j}{n} \epsilon_j(\mathbf{h})$, capturing the ``stochasticity inclusion'' in bound \eqref{eqn_2r} and \eqref{eqn_2r_1}. Authors in \cite{than2025gentle} provided an improved version of Theorem \ref{thm:khoat} (bound \ref{thm:khoat_6}) by considering the averages of loss at the clusters (local regions).

\begin{theorem}[{Khoat et al., \cite{than2025gentle}} Theorem 5] \label{thm:khoat_6}
Given the notions in Theorem \ref{thm:khoat} For any $\delta > 0$, with probability at least $1 - \delta$ over an iid draw of $n$ samples $S = \{z_1, \dots, z_n\}$, the following generalization bound holds:
\begin{equation}
\begin{split}
    \mathcal{R}_Z(\mathbf{h}) \leq \hat{\mathcal{R}}_{\mathcal{S}}(\mathbf{h}) + \sum_{j \in T_S} \frac{n_j}{n} \bar{\epsilon}_j(\mathbf{h}) + u_{3}(K, \mathcal{S}, \delta)\\
\end{split}
\label{eqn_2r_3}
\end{equation}
where:
\[\bar{\epsilon}_j = \frac{1}{|n_j|} \sum_{s \in C_j} \mathbb{E}_{z \in Z_j} \big[\,|\ell(\mathbf{h}, z) - \ell(\mathbf{h}, s)|\,\big] \]

\end{theorem}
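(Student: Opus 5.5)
The plan is to follow the decomposition used by Kawaguchi et al.\ for $u_{3}$, which already underlies Theorem~\ref{thm:khoat}. The only change is how the gap between the conditional cluster means and the empirical per-cluster averages is controlled. Write $\alpha_j(\mathbf{h}) = \mathbb{E}_z[\ell(\mathbf{h},z)\mid z\in C_j]$, let $n_j=|\mathcal{I}_j^S|$, and let $\mathbb{P}(C_j)$ denote the cluster masses. Then
\begin{align*}
\mathcal{R}_Z(\mathbf{h}) &= \sum_{j\in[K]} \mathbb{P}(C_j)\,\alpha_j(\mathbf{h}) \\
&= \sum_{j\in T_S}\frac{n_j}{n}\alpha_j(\mathbf{h}) + \sum_{j\in[K]}\Big(\mathbb{P}(C_j)-\frac{n_j}{n}\Big)\alpha_j(\mathbf{h}).
\end{align*}
The second sum is exactly the term that Kawaguchi et al.\ bound by $u_{3}(K,\mathcal{S},\delta)$, with probability at least $1-\delta$. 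Their argument uses the concentration of the multinomial vector $(n_1,\dots,n_K)$ together with the split of $[K]$ into $T_S$ and $\mathcal{T}_S^c$, which is what produces the constants $\mathcal{Q}_1$ and $\mathcal{Q}_2$. I will invoke that step as given. It depends only on $\alpha_j$ being fixed for the given $\mathbf{h}$ and on the loss being bounded, so it transfers verbatim.

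The remaining task is the deterministic comparison of $\sum_{j\in T_S}\frac{n_j}{n}\alpha_j(\mathbf{h})$ with the empirical risk $\hat{\mathcal{R}}_{\mathcal{S}}(\mathbf{h}) = \sum_{j\in T_S}\frac{n_j}{n}\cdot\frac{1}{n_j}\sum_{s\in C_j\cap\mathcal{S}}\ell(\mathbf{h},s)$. For each $j\in T_S$ I write
\[
\alpha_j(\mathbf{h}) - \frac{1}{n_j}\sum_{s\in C_j\cap \mathcal{S}}\ell(\mathbf{h},s) = \frac{1}{n_j}\sum_{s\in C_j\cap\mathcal{S}} \mathbb{E}_{z}\big[\ell(\mathbf{h},z)-\ell(\mathbf{h},s)\,\big|\, z\in C_j\big].
\]
Bounding the integrand by its absolute value gives at most $\frac{1}{n_j}\sum_{s}\mathbb{E}_{z\in Z_j}|\ell(\mathbf{h},z)-\ell(\mathbf{h},s)| = \bar{\epsilon}_j(\mathbf{h})$. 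Multiplying by $n_j/n$ and summing over $T_S$ yields the robustness term $\sum_{j\in T_S}\frac{n_j}{n}\bar\epsilon_j(\mathbf{h})$. Combining the two pieces gives the stated bound. It also shows the bound improves Theorem~\ref{thm:khoat}, since $\bar\epsilon_j\le\epsilon_j$ pointwise.

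I expect the main obstacle to be the probabilistic step. The quantities $\alpha_j(\mathbf{h})$ and $\bar\epsilon_j(\mathbf{h})$ must be non-random with respect to the draw of $\mathcal{S}$, yet $\mathbf{h}$ is "learned from $\mathcal{S}$". The multinomial concentration itself concerns only the counts $n_j$ and holds uniformly in the coefficients, but the $u_3$ term has coefficients $\alpha_j$ inside $\mathcal{Q}_1$ and $\mathcal{Q}_2$. I would first check that Kawaguchi et al.'s inequality has the form "for all nonnegative coefficient vectors bounded by the loss range simultaneously". If so, the dependence of $\mathbf{h}$ on $\mathcal{S}$ is harmless. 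Otherwise I would fall back on stating the result for $\mathbf{h}$ fixed independently of the sample, or on a union bound over a finite model set.

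A secondary point is interpreting $\mathbb{E}_{z\in Z_j}$ as the conditional distribution on $C_j$. In practice it is approximated by the test points in $C_j$, which gives an empirical surrogate rather than the exact quantity.
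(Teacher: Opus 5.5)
Your proposal is correct and follows the route the paper uses in its appendix proof of Theorem~\ref{thm:cluster_bound} (the paper cites this result without reproving it). That route splits the gap into $\sum_j \alpha_j(\mathbf{h})\bigl(\Pr(z\in C_j)-n_j/n\bigr)$, bounded by $u_{3}(K,\mathcal{S},\delta)$ via Kawaguchi et al., plus per-cluster deviations bounded deterministically, here by the averaged $\bar{\epsilon}_j(\mathbf{h})$ instead of the maximum. Your worry about $\mathbf{h}$ depending on $\mathcal{S}$ resolves favorably: Kawaguchi et al.'s multinomial bound comes from per-coordinate concentration of $n_k/n$ around $\Pr(z\in C_k)$, so it holds simultaneously for all nonnegative coefficient vectors.
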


\section{Local Stability Behaviors and Generalization}\label{section:3}
In this section, we discuss our novel bounds that connects local behaviors (per-input resilience) with the generalization ability of a given model. Our bounds are both model-specific and data-dependent, and they relax the strict robustness assumptions imposed by previous approaches \cite{xu2012robustness, kawaguchi2022robustness, than2025gentle}.
\subsection{Cluster Stability and Per-Input Robustness based Bounds} In the previous section, the algorithmic robustness-based generalization bounds quantify model robustness across the entire data space through the term $\epsilon(\mathcal{S})$. This global measure is derived using a supremum \cite{than2025gentle}, effectively capturing the worst-case local robustness of the model within individual regions or clusters of the input space. This robustness term is then assumed as the average (global) robustness term for the entire dataset \cite{xu2012robustness, kawaguchi2022robustness}. Consequently, this formulation often leads to vacuous generalization bound \cite{than2025gentle}. Since the supremum-based measure $\epsilon(\mathcal{S})$ is dominated by worst-case local variations rather than representing the entire dataset, the robustness term should instead be scaled in proportion to the fraction of a cluster samples.

We address the limitations of prior bounds by incorporating local robustness across the clusters into the generalization bound. Specifically, we decompose the supremum into two components: one from the stable subset and the other from the unstable subset. Unlike Theorems~\ref{thm:xu-generalization}, \ref{thm:kawaguchi}, \ref{thm:khoat}, and \ref{thm:khoat_6}, our approach does not apply the global supremum value $\epsilon(\mathcal{S})$ uniformly to all samples within a cluster. Instead, the supremum is separated into the stable and unstable parts and scaled according to their respective sample proportions. This results in a bound that is less conservative and more sensitive to the actual distribution of instability in the data. 
Our approach is computationally practical, as stable and unstable instances can be identified by measuring local loss differences or margin violations. It is based on the observation that unseen instances behave like similar training samples. When an unseen instance is close to stable samples, the model is likely to show low loss. Conversely, proximity to unstable samples leads the model to exhibit higher variations in loss \cite{7934087}. By capturing this behavior, the bound reflects the model’s performance more accurately and supports better comparison and selection. The following theorems present our bounds.
\begin{theorem}[Cluster \textbf{Stability}-Based Robustness Decomposition] \label{thm:cluster_bound}
Let $\mathcal{A_S}$ be a model trained on a dataset $\mathcal{S}$ using a bounded loss function $\ell$. Assume the sample space $\mathcal{Z}$ can be partitioned into stable $\hat{\mathcal{Z}}$ and unstable subsets $\widetilde{\mathcal{Z}}$, where $\mathcal{Z} = \hat{\mathcal{Z}} \cup \widetilde{\mathcal{Z}}$.

We define clustering independently over each subset:
\begin{itemize}
   \item Let $\{ \hat{C}_j \}_{j=1}^{K}$ be the clusters formed over the stable set $\hat{\mathcal{Z}}$.
    \item Let $\{ \widetilde{C}_j \}_{j=1}^{K}$ be the clusters formed over the unstable set $\widetilde{\mathcal{Z}}$.
    \item $\hat{T}_S$ as the clusters with at least one stable training instance.
    \item $\widetilde{T}_S$ as the clusters with at least one unstable training instance.
\end{itemize}

Define the local robustness margin within each cluster as follows:
\begin{align*}
\hat{\epsilon}_j(\mathcal{A_S}) &:= \max_{z, s \in \hat{C}_j} \left| \ell(\mathcal{A_S}, z) - \ell(\mathcal{A_S}, s) \right|\\
\widetilde{\epsilon}_j(\mathcal{A_S}) &:= \max_{z, s \in \widetilde{C}_j} \left| \ell(\mathcal{A_S}, z) - \ell(\mathcal{A_S}, s) \right|
\end{align*}

Then, for any $\delta > 0$, with probability at least $1 - \delta$ over an iid draw of $n$ training samples, the generalization error of $\mathcal{A_S}$ is bounded as:
\begin{equation}
\begin{split}
    \mathcal{R}_Z(\mathcal{A_S}) \leq \hat{\mathcal{R}}_{\mathcal{S}}(\mathcal{A_S}) + \sum_{j \in \hat{T}_S} \frac{\hat{n}_j(\mathcal{A_S})}{n} \cdot \hat{\epsilon}_j(\mathcal{A_S}) \\
    + \sum_{j \in \widetilde{T}_S} \frac{\widetilde{n}_j(\mathcal{A_S})}{n} \cdot \widetilde{\epsilon}_j(\mathcal{A_S})  + u_{3}(K, \mathcal{S}, \delta)
\end{split}
\label{eqn:cluster_bound_both}
\end{equation}

where:
\begin{itemize}
    \item $\hat{n}_j(\mathcal{A_S})$ denotes the number of training samples in the stable cluster $\hat{C}_j$,
    \item $\widetilde{n}_j(\mathcal{A_S})$ denotes the number of training samples in the unstable cluster $\widetilde{C}_j$.
\end{itemize}
\end{theorem}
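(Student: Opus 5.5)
The plan is to reduce Theorem~\ref{thm:cluster_bound} to the argument behind Theorem~\ref{thm:khoat}, applied to the refined partition of $\mathcal{Z}$. The key observation is that the collection $\{\hat{C}_j\}_{j=1}^{K}\cup\{\widetilde{C}_j\}_{j=1}^{K}$ is itself a partition of $\mathcal{Z}=\hat{\mathcal{Z}}\cup\widetilde{\mathcal{Z}}$ into at most $2K$ disjoint cells. We assume $\hat{\mathcal{Z}}\cap\widetilde{\mathcal{Z}}=\emptyset$, since otherwise the split is not a partition.

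The argument then proceeds in four steps.

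\textbf{Step 1: decompose the expected risk.} Write
\[
\mathcal{R}_Z(\mathcal{A_S})=\sum_{C}\mathbb{P}(C)\,\alpha_C(\mathcal{A_S}),
\]
where the sum runs over all cells $C$ of the refined partition and $\alpha_C$ is the conditional expected loss on $C$, exactly as in Theorem~\ref{thm:kawaguchi}.

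\textbf{Step 2: replace probability weights by empirical weights.} Replace each $\mathbb{P}(C)$ by its empirical frequency $n_C/n$, where $n_C$ is $\hat{n}_j$ or $\widetilde{n}_j$. The deviation is controlled by the multinomial concentration inequality of \cite{kawaguchi2022robustness}. This yields
\[
\mathcal{R}_Z(\mathcal{A_S})\le \sum_{C\in T_S}\frac{n_C}{n}\alpha_C(\mathcal{A_S})+u_3,
\]
with $u_3$ evaluated on the refined partition.

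\textbf{Step 3: compare with the training loss cell by cell.} For each occupied cell $C$ and each training point $s\in C$, every $z\in C$ satisfies
\[
\ell(\mathcal{A_S},z)\le \ell(\mathcal{A_S},s)+\epsilon_C,
\]
where $\epsilon_C$ is $\hat{\epsilon}_j$ or $\widetilde{\epsilon}_j$ by definition of the local margins. Taking the expectation over $z\in C$ gives $\alpha_C\le \ell(\mathcal{A_S},s)+\epsilon_C$. Averaging this over the $n_C$ training points in $C$ gives
\[
n_C\,\alpha_C\le \sum_{s\in C\cap\mathcal{S}}\ell(\mathcal{A_S},s)+n_C\,\epsilon_C.
\]

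\textbf{Step 4: sum over cells.} Summing over occupied cells and dividing by $n$ turns the loss sums into $\hat{\mathcal{R}}_{\mathcal{S}}(\mathcal{A_S})$, because the cells partition $\mathcal{S}$. The $\epsilon$ contributions split naturally into the stable sum over $\hat{T}_S$ and the unstable sum over $\widetilde{T}_S$, which is \eqref{eqn:cluster_bound_both}.

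The main obstacle is the uncertainty term. Applied literally, Step~2 on $2K$ cells produces $\ln(4K/\delta)$ rather than $\ln(2K/\delta)$, and $\mathcal{Q}_1,\mathcal{Q}_2$ become sums over the refined cells. Three routes are available:
\begin{itemize}
\item Read $K$ in $u_3$ as the total number of cells of the combined partition, i.e.\ relabel so that $K$ counts all clusters.
\item Absorb the constant by running the concentration separately on $\hat{\mathcal{Z}}$ and $\widetilde{\mathcal{Z}}$ at confidence $\delta/2$ each.
\item Bypass the issue by invoking Theorem~\ref{thm:khoat} directly as a black box with the refined partition, since that theorem holds for an arbitrary fixed partition.
\end{itemize}
I would take the third route. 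Note that the partition must not depend on $\mathcal{S}$ for the concentration to apply.

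There is also a subtlety when the stable/unstable split is defined through $\mathcal{A_S}$ itself, which makes the partition data-dependent. I would handle this as the cited works do, treating the partition as fixed, and state that assumption explicitly.
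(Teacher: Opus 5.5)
Your proof is correct, with $u_3$ evaluated on the refined partition as your Step 2 says. It is not the paper's route, and the two differ exactly at the obstacle you identified.

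\textbf{How the paper proceeds.} The paper never refines the partition. It applies the multinomial concentration of \cite{kawaguchi2022robustness} to the coarse cells $C_j=\hat C_j\cup\widetilde C_j$; this is where its $u_3(K,\mathcal S,\delta)$, with $\ln(2K/\delta)$ and coarse $\alpha_k$, comes from. Only afterwards does it split the within-cell term into stable and unstable parts, using $\alpha_j=\hat\alpha_j+\widetilde\alpha_j$ and $\alpha_j|n_j|=\hat\alpha_j|\hat n_j|+\widetilde\alpha_j|\widetilde n_j|$.

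\textbf{Why that split fails.} Neither identity holds. In fact $\alpha_j=q_j\hat\alpha_j+(1-q_j)\widetilde\alpha_j$ with $q_j=\mathbb{P}(\hat C_j)/\mathbb{P}(C_j)$. The second identity is off by $(\hat\alpha_j-\widetilde\alpha_j)(q_j|n_j|-|\hat n_j|)$, a sampling fluctuation that the coarse $u_3$ does not see. Your refined-cell concentration controls exactly this fluctuation, which is why you pay for $2K$ cells.

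\textbf{The payment cannot be avoided.} Take $K=1$, the $0$-$1$ loss, and a fixed $h$ with loss $0$ on $\hat{\mathcal Z}$ and $1$ on $\widetilde{\mathcal Z}$. Put $p=\mathbb{P}(\widetilde{\mathcal Z})=1/n$.
Then $\hat\epsilon_1=\widetilde\epsilon_1=0$. The coarse cell $C_1=\mathcal Z$ is always occupied, so $\mathcal T_S^c=\emptyset$ and $u_3(1,\mathcal S,\delta)=p\bigl(\sqrt{2\ln(2/\delta)/n}+2\ln(2/\delta)/n\bigr)$.
Consider the event $\widetilde n=0$, which has probability $(1-1/n)^n\approx e^{-1}>\delta$ for, say, $\delta=0.01$. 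On it the coarse bound asserts $p\le u_3(1,\mathcal S,\delta)$, which is false for large $n$.
The paper's own chain fares worse here: its coarse term $\alpha_1(\mathbb{P}(C_1)-|n_1|/n)$ vanishes identically, so it would give $p\le\widetilde n/n$ for every draw.
Your refined bound holds on this event, since there $\widetilde C_1$ is unoccupied and $\alpha_{\mathcal T_S^c}=1$.

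So state the theorem with $T_S$, $\alpha_k$, $\mathcal Q_1$, $\mathcal Q_2$ and $\ln(4K/\delta)$ all taken over the $2K$ refined cells. Present this as a necessary correction to the statement, not as a relabelling convenience.

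\textbf{Smaller points.}
\begin{itemize}
\item Your third route is not a bypass. Theorem~\ref{thm:khoat} applied to a $2K$-cell partition returns $u_3$ on those $2K$ cells, so it is your first route.
\item Your second route does not absorb the constant either: at confidence $\delta/2$ you already get $\ln(2K/(\delta/2))=\ln(4K/\delta)$.
\item Your caveat about a partition built from $\mathcal{A_S}$ is well placed. The paper simply assumes the partition is fixed.
\end{itemize}
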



Theorem \ref{thm:cluster_bound} establishes that the robustness term in the previous bounds can be decomposed into separate contributions from stable and unstable samples within each cluster. For more general loss functions, this formulation underscores that stable samples, while exhibiting smaller errors \cite{7934087}, still make non-negligible contributions to the robustness term. To capture this distinction, we partition the training data into stable and unstable clusters: unstable clusters reflect regions of high sensitivity, whereas stable clusters account for consistently low-error regions.

This partition also provides insight into how different regions contribute to the overall expected risk. In particular, by decomposing the input space into stable and unstable regions, Lemma \ref{lem:lemma1} shows that, under $0$-$1$ loss, the contribution of the stable subset $\hat{\mathcal{Z}}$ to the expected risk is negligible, while the dominant contribution arises from the unstable subset $\widetilde{\mathcal{Z}}$.

Following \cite{7934087}, where it has been stated that robust models achieve a vanishing robustness term $\epsilon(S) = 0$ across the dataset, our stable subset $\hat{\mathcal{Z}}$ plays an analogous role at a local level. In these regions, the model $\mathcal{A}_S$ exhibits consistently correct predictions by construction (i.e., $\alpha_1 \approx 0$), which justifies focusing on the contribution of unstable clusters in Corollary~\ref{thm:ours}.

\begin{lemma}
\label{lem:lemma1}(Risk Bound via Unstable Samples)
Consider that the input space $\mathcal{Z}$ is decomposed into $\mathcal{Z}_1 = \hat{\mathcal{Z}}$ and $\mathcal{Z}_2 = \widetilde{\mathcal{Z}}$, where $\widetilde{\mathcal{Z}} = \mathcal{Z} \setminus \hat{\mathcal{Z}}$. Given 
$\alpha_i(A_S) := \mathbb{E}_{z \sim \mathcal{Z}_i}\big[\ell(A_S, z)\big],$ i.e., the expected loss restricted to the subset $\mathcal{Z}_i$. Then, the expected risk under $0$-$1$ loss can be decomposed as:
\[
\mathcal{R}_{\mathcal{Z}}(\mathcal{A}_S) 
= \mathbb{P}(z \in \mathcal{Z}_1)\,\alpha_1(\mathcal{A}_S) 
+ \mathbb{P}(z \in \mathcal{Z}_2)\,\alpha_2(\mathcal{A}_S).
\]

By construction, $\hat{\mathcal{Z}}$ represents the stable subset where the model $\mathcal{A}_S$ predicts correctly with high probability. Under the $0$-$1$ loss, correct predictions incur zero loss; therefore, the expected loss on $\mathcal{Z}_1$ is negligible, i.e., $\alpha_1(\mathcal{A}_S) \approx 0$. Moreover, since the $0$-$1$ loss is bounded in $[0,1]$, we have $\alpha_2(\mathcal{A}_S) \leq 1$. Consequently,
\[
\mathcal{R}_{\mathcal{Z}}(\mathcal{A}_S) 
\leq \mathbb{P}(z \in \mathcal{Z}_2)\,\alpha_2(\mathcal{A}_S)
\leq \alpha_2(\mathcal{A}_S).
\]
\end{lemma}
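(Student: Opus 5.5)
The plan is to prove the decomposition by conditioning on the partition $\{\mathcal{Z}_1,\mathcal{Z}_2\}$, and then obtain the inequality from the stability hypothesis together with boundedness of the $0$-$1$ loss. The first step is to interpret $\alpha_i(\mathcal{A}_S)$ as the conditional expectation $\mathbb{E}_{z\sim\mathbb{P}}[\ell(\mathcal{A}_S,z)\mid z\in\mathcal{Z}_i]$, exactly as $\alpha_k$ is defined for clusters in Theorem~\ref{thm:kawaguchi}. Since $\mathcal{Z}_1\cap\mathcal{Z}_2=\emptyset$ and $\mathcal{Z}_1\cup\mathcal{Z}_2=\mathcal{Z}$, we have $\ell = \ell\,\mathbf{1}_{\mathcal{Z}_1}+\ell\,\mathbf{1}_{\mathcal{Z}_2}$ pointwise. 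Linearity of expectation then gives
\[
\mathcal{R}_{\mathcal{Z}}(\mathcal{A}_S)=\sum_{i=1}^{2}\mathbb{E}[\ell(\mathcal{A}_S,z)\mathbf{1}\{z\in\mathcal{Z}_i\}]=\sum_{i=1}^{2}\mathbb{P}(z\in\mathcal{Z}_i)\,\alpha_i(\mathcal{A}_S),
\]
which is the law of total expectation. If some $\mathbb{P}(z\in\mathcal{Z}_i)=0$, the corresponding term is zero regardless of how $\alpha_i$ is defined, so we adopt the convention $0\cdot\alpha_i=0$.

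For the inequality, I will make the stability hypothesis precise. On $\hat{\mathcal{Z}}$ the model predicts correctly, so under the $0$-$1$ loss we have $\ell(\mathcal{A}_S,z)=0$ for $z\in\mathcal{Z}_1$, and hence $\alpha_1(\mathcal{A}_S)=0$. This makes the first inequality an equality, $\mathcal{R}_{\mathcal{Z}}=\mathbb{P}(z\in\mathcal{Z}_2)\alpha_2$. The second inequality follows because $\mathbb{P}(z\in\mathcal{Z}_2)\le 1$ and $\alpha_2\ge 0$, the loss being nonnegative. The bound $\alpha_2\le 1$ is stated only for completeness.

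The main obstacle is that the statement only says $\alpha_1\approx 0$, so the inequality is not literally true unless correctness on $\hat{\mathcal{Z}}$ is exact. I would handle this by proving the robust version: if the model errs on $\hat{\mathcal{Z}}$ with probability at most $\eta$, then $\alpha_1\le\eta$. This yields
\[
\mathcal{R}_{\mathcal{Z}}\le \mathbb{P}(z\in\mathcal{Z}_2)\,\alpha_2+\mathbb{P}(z\in\mathcal{Z}_1)\,\eta\le\alpha_2+\eta.
\]
The stated lemma is then the idealized case $\eta=0$, which holds because stable points are defined as points that are classified correctly.
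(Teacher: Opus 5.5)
Your proof is correct and follows essentially the same route as the paper's: the law of total expectation over the partition $\{\mathcal{Z}_1,\mathcal{Z}_2\}$, then dropping the $\alpha_1(\mathcal{A}_S)$ term and using $\mathbb{P}(z\in\mathcal{Z}_2)\le 1$ together with $\alpha_2(\mathcal{A}_S)\ge 0$. Your $\eta$-version, $\mathcal{R}_{\mathcal{Z}}(\mathcal{A}_S)\le \mathbb{P}(z\in\mathcal{Z}_2)\,\alpha_2(\mathcal{A}_S)+\eta$, makes precise the ``$\alpha_1\approx 0$'' step that the paper only asserts. Note, however, that $\eta=0$ is an idealization and not a consequence of the definition: Algorithm~\ref{algo:1} labels a point stable whenever fewer than a $\tau$ fraction of its perturbed neighbors are misclassified, and the paper reports only near-perfect accuracy on stable samples.
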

This implies that the expected risk can be bounded by the risk associated with the unstable samples in the dataset. Building on this lemma, we propose the following bounds.
    
\begin{corollary}[Per-Input Local Robustness]
\label{thm:ours}
Under the setting and notations of Theorem \ref{thm:cluster_bound}, assume that the expected loss over the stable subset $\hat{\mathcal{Z}}$ is negligible (e.g., under $0$-$1$ loss where correct predictions incur zero loss). Suppose the unstable subset of the sample space $\widetilde{\mathcal{Z}}$ can be clustered into $K$ disjoint subsets $\{ \widetilde{C}_j \}_{j=1}^{K}$.\\

Define:
\begin{itemize}
    \item the worst-case local robustness variation within each unstable cluster as:
\end{itemize}

\[
\widetilde{\epsilon}_j(\mathcal{A_S}) := \max_{\substack{s,z \in \widetilde{C}_j}} \left| \ell(\mathcal{A_S}, z) - \ell(\mathcal{A_S}, s) \right|.
\]

Then, for any $\delta > 0$, with probability at least $1 - \delta$ over the iid draw of the training dataset $\mathcal{S}$, the generalization error of $\mathcal{A_S}$ satisfies:
\begin{equation}
\begin{split}
    \mathcal{R}_Z(\mathcal{A_S}) \leq  \hat{\mathcal{R}}_{\mathcal{S}}(\mathcal{A_S})
    + \sum_{j \in \widetilde{T}_S} \frac{\widetilde{n}_j(\mathcal{A_S})}{n} \cdot \widetilde{\epsilon}_j(\mathcal{A_S}) + u_{3}(K, \mathcal{S}, \delta)
\end{split}
\label{eqn_2r_our}
\end{equation}
where:
\begin{itemize}
    \item $\widetilde{n}_j(\mathcal{A_S})$ is as defined in Theorem \ref{thm:cluster_bound}.
\end{itemize}
\end{corollary}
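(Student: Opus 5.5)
We will obtain the corollary as a direct specialization of Theorem~\ref{thm:cluster_bound}, using Lemma~\ref{lem:lemma1} to remove the stable-cluster contribution. Start from the bound \eqref{eqn:cluster_bound_both}. It holds with probability at least $1-\delta$ over the draw of $\mathcal{S}$, with the same uncertainty term $u_{3}(K,\mathcal{S},\delta)$. Since every quantity in the corollary is already defined in the setting of Theorem~\ref{thm:cluster_bound}, no new concentration argument is needed. The whole proof reduces to showing that the stable-cluster sum
\[
\sum_{j \in \hat{T}_S} \frac{\hat{n}_j(\mathcal{A_S})}{n}\,\hat{\epsilon}_j(\mathcal{A_S})
\]
may be dropped under the stated assumption.

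\textbf{Step 1: the stable-cluster robustness margins vanish.} Under the $0$-$1$ loss, the assumption that the expected loss on $\hat{\mathcal{Z}}$ is negligible is read in its idealized form $\alpha_1(\mathcal{A_S})=0$. This means the model predicts correctly on (almost) all of $\hat{\mathcal{Z}}$, in line with the discussion following Lemma~\ref{lem:lemma1} and the analogy with \cite{7934087}. Hence $\ell(\mathcal{A_S},z)=0$ for all $z$ in every $\hat{C}_j$, up to a null set. Consequently
\[
\hat{\epsilon}_j(\mathcal{A_S}) = \max_{z,s\in\hat{C}_j}|\ell(\mathcal{A_S},z)-\ell(\mathcal{A_S},s)| = 0
\]
for every $j\in\hat{T}_S$, and the entire stable sum is zero.

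\textbf{Step 2: substitute and conclude.} Inserting this into \eqref{eqn:cluster_bound_both} leaves exactly \eqref{eqn_2r_our}. The empirical risk term, the unstable sum with weights $\widetilde{n}_j(\mathcal{A_S})/n$, and $u_3$ are all unchanged. Lemma~\ref{lem:lemma1} serves as the justification that the risk is carried entirely by $\widetilde{\mathcal{Z}}$, so the remaining robustness term still controls the gap between $\mathcal{R}_Z$ and $\hat{\mathcal{R}}_{\mathcal{S}}$.

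\textbf{Main obstacle.} The delicate point is Step 1, because ``negligible'' ($\alpha_1\approx 0$) is weaker than the pointwise statement $\hat{\epsilon}_j=0$. The maximum defining $\hat{\epsilon}_j$ could equal $1$ if even a single point of $\hat{C}_j$ is misclassified. I would handle this in one of two ways:
\begin{enumerate}
\item make the assumption exact: assume $\ell(\mathcal{A_S},z)=0$ on $\hat{\mathcal{Z}}$, which matches the definition of a stable sample as correctly classified together with its neighbors;
\item alternatively, keep a residual term. Bound the stable contribution by $\mathbb{P}(z\in\hat{\mathcal{Z}})\,\alpha_1(\mathcal{A_S})$ through the decomposition of Lemma~\ref{lem:lemma1}, and then absorb it as an additive error that is declared negligible.
\end{enumerate}
I would present the first version as the main argument and remark on the second.
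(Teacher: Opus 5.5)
Your proposal is correct and follows essentially the paper's route. The paper likewise obtains the corollary from Theorem~\ref{thm:cluster_bound} by declaring the stable-cluster term $\sum_{j}\frac{|\hat{n}_j|}{n}\hat{\epsilon}_j(\mathcal{A_S})\approx 0$ and recombining the gap decomposition with the unstable-cluster estimate and the $u_3$ concentration term; your pointwise assumption (zero loss on all of $\hat{\mathcal{Z}}$, needed because $\hat{\epsilon}_j$ is a maximum rather than an expectation) just makes precise what the paper leaves as ``$\approx 0$''.

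Your second option, as written, does not quite work: the stable term in the theorem's bound is max-based, so it cannot be bounded by $\mathbb{P}(z\in\hat{\mathcal{Z}})\,\alpha_1(\mathcal{A_S})$ directly. The cleaner fix is to work inside the decomposition, where nonnegativity of the loss gives $\sum_j \frac{|\hat{n}_j|}{n}\bigl(\hat{\alpha}_j(\mathcal{A_S}) - \frac{1}{|\hat{n}_j|}\sum_{i\in\hat{n}_j}\ell(\mathcal{A_S},\hat{z}_i)\bigr) \le \sum_j \frac{|\hat{n}_j|}{n}\hat{\alpha}_j(\mathcal{A_S})$. Hence vanishing conditional expected loss on the stable clusters already suffices, with no pointwise condition needed.
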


Corollary \ref{thm:ours} establishes that the expected loss of a model $\mathcal{A_S} \in \mathcal{H}$ can be bounded using $\sum_{j\in \widetilde{T}_S} \frac{\widetilde{n}_j(\mathcal{A_S})}{n} \cdot \widetilde{\epsilon}_j(\mathcal{A_S})$, which quantifies the local robustness of $\mathcal{A_S}$ based solely on unstable training instances across all the clusters. This result suggests that a model can achieve good generalization by maintaining local robustness specifically around unstable training points. By focusing on robustness efforts on these critical regions and ensuring a small empirical risk $\hat{\mathcal{R}}_{\mathcal{S}}(\mathcal{A_S})$, the model can attain a low expected loss over the entire sample space. Like the bound in (\ref{eqn_2r_2}), our bound in Corollary \ref{thm:ours} differs from the bounds in Theorems \ref{thm:xu-generalization} and \ref{thm:kawaguchi} in the following ways:
\begin{itemize}
    \item Firstly, our bound (\ref{eqn_2r_our}) (Corollary \ref{thm:ours}) relaxes the strict assumption of algorithmic robustness. Instead of requiring that a model exhibit uniformly small loss differences between all pairs of nearby training and test samples, our bound focuses only on the subset of training instances that exhibit unstable behavior.
    \item Secondly, our bound introduces both model specificity and data dependency in computing the robustness term. It explicitly depends on the particular model $\mathcal{A_S}$ and the training dataset $\mathcal{S}$. This is a clear advantage over prior bounds that rely only on global, data-dependent robustness terms. By capturing model-specific behavior, our bound provides a more accurate evaluation of generalization performance. In addition, it enables meaningful model selection and comparison across different architectures or training settings. This makes it especially useful for benchmarking robust learning methods in real-world applications.
    \item Thirdly, unlike prior bounds in (\ref{eqn_2r}) and (\ref{eqn_2r_1}) that rely on a global robustness level $\epsilon(\mathcal{S})$, our bound in \eqref{eqn_2r_our} uses localized terms $ \sum_{j \in \widetilde{T}_S} \frac{\widetilde{n}_j(\mathcal{S})}{n} \cdot \widetilde{\epsilon}_j(\mathcal{A_S})$, which focus on unstable training instances. Unstable samples correspond to regions where the model is highly sensitive, and this sensitivity is a primary source of variability across different stochastic realizations of the learning algorithm. By concentrating on these regions, the formulation alleviates the ``stochasticity inclusion'' challenge, where global robustness measures require worst-case aggregation over both stable and unstable regions as well as stochastic variations. Instead, it provides a more tractable and targeted characterization of generalization by focusing on the dominant sources of variability. While this approach is motivated by stochastic training effects, it does not explicitly model stochasticity, but mitigates its impact by avoiding overly conservative global estimates.
    \item Lastly, in real-world settings, generalization errors are often caused by a small subset of unstable or difficult training instances \cite{7934087}. The bound in \eqref{eqn_2r_our} accounts for this by weighting the worst-case margin according to the proportion of these unstable samples, resulting in a more accurate and tighter estimate of the expected loss. Worst-case generalization bounds that rely solely on unstable samples are tight under the $0$-$1$ loss, since assigning the maximum error to all unstable samples directly corresponds to classification errors.\\
    However, beyond the $0$-$1$ loss, our bound in Theorem \ref{thm:cluster_bound} provides a more suitable framework for assessing the generalization of deep learning models.
\end{itemize}
Next, remark \ref{lem:bound_comparison_revised} establishes the tightness of our bounds in Theorem \ref{thm:cluster_bound} and Corollary~\ref{thm:ours}. Finally, we relegate the proof of our bounds in Theorem \ref{thm:cluster_bound} and Corollary~\ref{thm:ours} to \ref{appendx:A}. \\

\begin{remark}[Comparison of Robustness Bounds]
\label{lem:bound_comparison_revised}

Under the same setting and notation as Theorems \ref{thm:cluster_bound} and \ref{thm:ours}, the following holds:

\begin{itemize}
    \item First, the local cluster margins $\hat{\epsilon}_j(\mathcal{A_S})$ for stable clusters are often zero or negligibly small due to inherent robustness within stable regions.
    \item Second, the per-input local robustness-based bound introduces finer granularity by scaling each intra-cluster supremum \(\widetilde{\epsilon}_j(\mathcal{A_S})\) by the individual cluster size \(\widetilde{n}_j (\mathcal{A_S})\). In contrast, the cluster stability-based bound captures the deviation stemming from the stables and unstable samples, thus making this bound general for different loss functions.
\end{itemize}

Hence, the per-input local robustness-based bound \eqref{eqn_2r_our} is often strictly tighter than the cluster stability-based bound \eqref{eqn:cluster_bound_both}. This is especially true for $0$-$1$ loss. It also holds when most stable clusters have volatile loss behavior and the worst-case deviations are limited to a few unstable regions. 
\end{remark}

\subsection{Our Bounds Comparison with Pseudo-Robustness Bounds}
Our proposed generalization bounds differ fundamentally from the pseudo-robustness framework introduced by \cite{xu2012robustness, kawaguchi2022robustness}, both in form and in practical applicability. Both bounds share a similar uncertainty term that captures the deviation from empirical to true risk due to finite sampling. However, they differ in how they use samples of interest and define robustness in the core decomposition. The pseudo-robustness bound relies on a data-dependent subset of ``stable'' samples determined by fixed input-space clustering. It introduces two terms: one that accounts for the robustness over these stable regions, and another that penalizes the remaining ``unstable'' samples via worst-case deviations. However, this formulation assumes that robustness can be captured purely through properties of the data, independent of how the model interacts with it \cite{xu2012robustness}.

In contrast, our bounds adopt a joint data- and model-dependent formulation. Specifically, we partition the sample set into $\hat{\mathcal{S}}$ and $\widetilde{\mathcal{S}}$, where the selection is informed not only by data structure but also by model behavior (e.g., local margins, confidence, or feature space concentration). The robustness terms $\hat{\epsilon}_j(\mathcal{A}_S)$ and $\widetilde{\epsilon}_j(\mathcal{A}_S)$ are similarly model-aware, capturing how the algorithm generalizes across these subsets. This allows the bound to better reflect the learning dynamics of complex models, particularly in high-capacity regimes such as deep neural networks.

Moreover, instead of relying on a single global robustness term across all clusters, our formulation distributes contributions proportionally through structured model-aware error terms. This prevents overestimation of instability, resulting in a tighter and more interpretable generalization bound. As a result, the proposed approach maintains computational tractability while more rigorously characterizing the behavior of modern learning models under real-world complexity and uncertainty.

\section{Implementation of the Proposed Framework}\label{section:4}

In this section, we present the proposed framework for empirically estimating the upper bounds on the true error of learned models using our derived bounds. Although the theoretical results are defined over the sample space $\mathcal{Z}$, their practical evaluation relies on finite datasets. Accordingly, we use the training dataset $\mathcal{S}$ and $Z_m$ as representative samples from $\mathcal{Z}$ to approximate the quantities involved, consistent with standard generalization assessment practices \cite{than2025gentle}. Our framework first partitions the training dataset into stable and unstable subsets, followed by a clustering procedure aligned with the bounds in \eqref{eqn:cluster_bound_both} and \eqref{eqn_2r_our}. Unlike conventional approaches, this explicit separation enables a more targeted estimation of generalization behavior by focusing on robustness characteristics within each subset.

To this end, we introduce the per-input resilient analyzer to perform the partitioning, and subsequently describe how stable and unstable clusters are constructed using the training dataset $\mathcal{S}$ together with the test samples $Z_m$.

\subsection{Per-input resilient analyzer}\label{per-input} The concept of using a per-input resilient analyzer to partition a dataset into stable and unstable subsets was first introduced by authors in \cite{nuhu2025validationstrategydeeplearning}. While they utilized the unstable samples to guide robustness enhancement, we examine how these subsets can be leveraged for generalization assessment. Algorithm \ref{algo:1} outlines the per-input resilient analyzer, which ranks data samples from least to most stable. The analyzer takes as input a trained model $\mathcal{A_S}$ and a given dataset $\mathcal{D}$. Empty dictionaries $\hat{\mathcal{D}}$ and $\widetilde{\mathcal{D}}$ are initialized to store the stable and unstable samples respectively. For each instance $x_i$ in the given dataset, the analyzer generates $\kappa$ perturbed neighbors through random sampling, with each perturbation constrained by a maximum perturbation magnitude $\epsilon$. Each perturbed neighbor $x_q$ (where $q \in \kappa$) is passed through $\mathcal{A_S}$ to obtain the predicted label $\hat{y}_q$. If the prediction is correct ($\hat{y}_q = y_{i}$), the neighbor is assigned a misprediction score $M_s = 0$; otherwise ($\hat{y}_q \neq y_{i}$), it is assigned $M_s = 1$. This scoring scheme is formalized in Equation~\eqref{misprediction}. The stability score $\gamma_i$ for the original sample $x_i$ is computed as the average of its neighbors’ misprediction scores as shown in Equation \eqref{eqn: gamma}. Once stability scores are computed for all instances in a class, the samples are sorted in descending order of $\gamma_i$, thereby ranking them from least to most stable. 
\begin{algorithm}
    \setlength{\textfloatsep}{5pt}  
    \small 
    \caption{Per-input Resilient Analyzer}
    \label{algo:1}
    \begin{algorithmic}[1]
        \Require \(\mathcal{A_S}\), \(\mathcal{D}\), \(\kappa\), \(\epsilon\), \(\tau\)
        \Ensure Lists \(\hat{\mathcal{D}}\) and \(\widetilde{\mathcal{D}}\)  containing stable and unstable training samples, respectively
        
        \Function{ComputeMispredictionScore}{$x$, $y$}
            \State \(neighbor\_sum \gets 0\)  
            \For{\(q = 1\) \textbf{to} \(\kappa\)}
                \State \(x_q \gets x + \epsilon \cdot \text{random}(\text{size}(x))\)  
                \State \(neighbor\_sum \gets neighbor\_sum + M_s(x_q)\)
            \EndFor
            \State \Return \(\frac{neighbor\_sum}{\kappa}\)
        \EndFunction
        
        \State \(\hat{\mathcal{D}} \gets [~]\), \(\widetilde{\mathcal{D}} \gets [~]\) 
        
        \For{each class \(y_i \in \mathcal{Y}\)}
            \State Find \(v_c\): samples in \(\mathcal{S}\) belonging to class \(y_i\)  
            \State Initialize list \(R \gets [~]\)  
            \For{\(i = 1\) \textbf{to} \(|v_c|\)}
                \State \(\gamma_i \gets \Call{ComputeMispredictionScore}{x_i, y_i}\)  
                \State Append \((x_i, \gamma_i)\) to \(R\)
            \EndFor
            \State \textbf{Sort} \(R\) in descending order by \(\gamma_i\)  
            
            \For{each \((x_i, \gamma_i) \in R\)}
                \If{\(\gamma_i \geq \tau\)}
                    \State Append \(x_i\) to \(\widetilde{\mathcal{D}}\)
                \Else
                    \State Append \(x_i\) to \(\hat{\mathcal{D}}\)
                \EndIf
            \EndFor
        \EndFor
        \State \Return \((\hat{\mathcal{D}}, \widetilde{\mathcal{D}})\)
    \end{algorithmic}
\end{algorithm}

Finally, based on a predefined threshold $\tau$, the sorted dataset is partitioned into stable and unstable subsets. Samples with stability scores greater than or equal to $\tau$ are assigned to the unstable subset $\widetilde{\mathcal{D}}$, while those with scores below $\tau$ are assigned to the stable subset $\hat{\mathcal{D}}$.\\ 
\begin{equation}
M_s(x_q) =
\begin{cases}
0, & \text{if } \hat{y}_q = y_i\\
1, & otherwise
\end{cases}
\label{misprediction}
\end{equation}
\begin{equation}
    \gamma_i = \frac{1}{\kappa} \sum_{q=1}^{\kappa} M_s(x_q)
    \label{eqn: gamma}
\end{equation}
\subsection{Stable and Unstable Clusters Formation}
To construct stable and unstable clusters, we first identify the corresponding samples in each dataset split (training or test) using per-input resilient analyzer described in subsection \eqref{per-input}. This partitions the training set $\mathcal{S}$ into stable samples $\hat{\mathcal{S}}$ and unstable samples $\widetilde{\mathcal{S}}$, and the test set $Z_m$ into stable samples $\hat{Z}$ and unstable samples $\widetilde{Z}$. The overall framework, along with the resulting cluster structure, is illustrated in Fig.~\ref{fig:part_clust}.
\begin{figure}
    \centering
    \includegraphics[width=1.0\columnwidth]{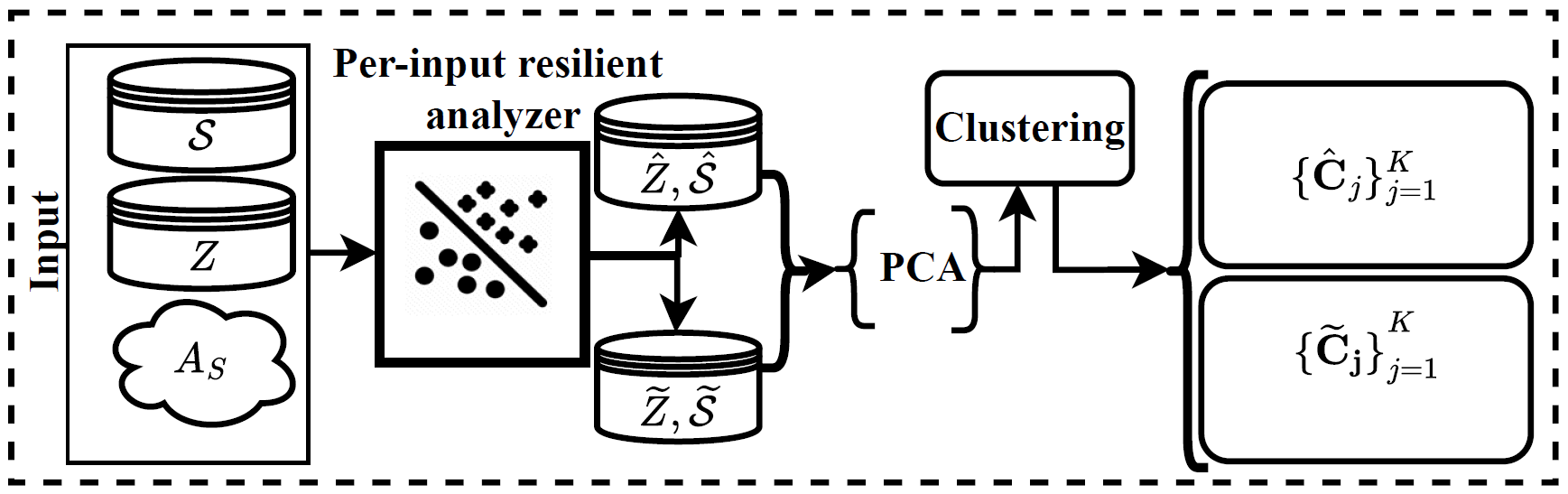}
    \caption{An overview of partitioning and clustering for the formation of stable and unstable clusters. $Z_m$ and $\mathcal{S}$ represent a test and train dataset, respectively.}
    \label{fig:part_clust}
\end{figure}
Cluster formation is performed independently for each category:

\begin{itemize}
    \item \textbf{Stable clusters:} Centroids are selected from the stable test subset $\hat{Z}$. High-level feature representations of both $\hat{Z}$ and $\hat{\mathcal{S}}$ are extracted using the trained model $\mathcal{A_S}$ and subsequently reduced via Principal Component Analysis (PCA). K-means clustering is then applied to assign each stable training sample in $\hat{\mathcal{S}}$ to its nearest stable centroid, forming stable clusters $\{\hat{C}_j\}_{j=1}^{K}$.
    
    \item \textbf{Unstable clusters:} Centroids are similarly selected from the unstable test subset $\widetilde{Z}$. PCA-reduced features of both $\widetilde{Z}$ and $\widetilde{\mathcal{S}}$ are computed, and K-means clustering is applied to assign each unstable training sample in $\widetilde{\mathcal{S}}$ to its nearest unstable centroid, forming unstable clusters $\{\widetilde{C}_j\}_{j=1}^{K}$.
\end{itemize}

By constructing stable and unstable clusters separately, this approach ensures that the resulting partitions truly capture the local behavior of the model in both low-variation (stable) and high-sensitivity (unstable) regions. Although the clusters are formed independently, the procedure is conceptually equivalent to decomposing a cluster into stable and unstable components. In both cases, the goal is to distinguish regions of consistent model behavior from those exhibiting high variability. This structured partitioning provides a rigorous basis for robustness-based analysis and localized generalization assessment. The complete procedure is summarized in Algorithm~\ref{algo:stable_unstable_clusters}.

\begin{algorithm}
    \setlength{\textfloatsep}{5pt}
    \small
    \caption{Stable and Unstable Clusters Formation}
    \begin{algorithmic}[1]
        \Require Training set \(\mathcal{S}\), test set \(Z\), model \(\mathcal{A_S}\), number of clusters \(K\)
        \Ensure Stable and unstable clusters: \(\{\hat{C}_j\}_{j=1}^{K}\), \(\{\widetilde{C}_j\}_{j=1}^{K}\)

        \State Use per-input resilient analyzer to partition \(\mathcal{S}\) into \(\hat{\mathcal{S}}\) (stable) and \(\widetilde{\mathcal{S}}\) (unstable)
        \State Similarly partition \(Z\) into \(\hat{Z}\) (stable) and \(\widetilde{Z}\) (unstable)

        \For{each subset in \(\{\hat{\mathcal{S}}, \widetilde{\mathcal{S}}\}\)}
            \State Extract high-level features using model \(\mathcal{A_S}\)
            \State Apply PCA to reduce feature dimensionality
            \If{subset is \(\hat{\mathcal{S}}\)}
                \State Extract PCA-reduced features from \(\hat{Z}\) to serve as centroids
                \State Apply K-means clustering to assign \(\hat{\mathcal{S}}\) into \(\{\hat{C}_j\}_{j=1}^{K}\)
            \Else
                \State Extract PCA-reduced features from \(\widetilde{Z}\) to serve as centroids
                \State Apply K-means clustering to assign \(\widetilde{\mathcal{S}}\) into \(\{\widetilde{C}_j\}_{j=1}^K\)
            \EndIf
        \EndFor
        \State \Return \(\{\hat{C}_j\}_{j=1}^K\), \(\{\widetilde{C}_j\}_{j=1}^K\)
    \end{algorithmic}
    \label{algo:stable_unstable_clusters}
\end{algorithm}
\subsection{Computational Complexity}
We briefly analyze the computational cost of Algorithms 1 and 2. For Algorithm 1 (Per-input Resilient Analyzer), let $n = |\mathcal{S}|$ denote the number of training samples and $\kappa$ the number of perturbations per sample. The dominant cost arises from evaluating the model over perturbed inputs, resulting in a total complexity of $O(n\kappa)$. In addition, the sorting operation incurs a cost of $O(n \log n)$. Therefore, the overall time complexity of Algorithm 1 is
\[
O(n\kappa + n \log n).
\]

Notably, the perturbation-based evaluations are independent across samples, allowing Algorithm 1 to be efficiently parallelized across multiple computational units (e.g., GPUs or distributed systems), which significantly improves scalability in practice. For Algorithm 2 (Stable and Unstable Clusters Formation), let $d$ denote the input feature dimension, $d'$ denote the extracted input feature dimension, $k$ the number of clusters, and $i$ the number of iterations in the K-means algorithm. The main computational steps include feature extraction, dimensionality reduction, and clustering. Feature extraction requires $O(n)$, while PCA incurs $O(n d^2)$ in general. The K-means clustering step has complexity $O(n K d' i)$. Thus, the overall time complexity of Algorithm 2 is
\[
O(n + n d^2 + n k d' i).
\]

Overall, both algorithms scale linearly with the dataset size $n$, with the dominant cost driven by perturbation-based evaluations in Algorithm 1 and clustering operations in Algorithm 2.

\section{Experimental Setup} \label{section:5}
In this section, we present an empirical evaluation of our proposed bounds and compare them against baseline methods using state-of-the-art deep learning models on standard classification datasets.

\subsection{Bounds evaluation on ImageNet pretrained models}
\subsubsection{Models Architecture Utilized}
To ensure a fair and consistent comparison of the generalization bounds, we adopt the same set of models pretrained on ImageNet \cite{russakovsky2015imagenet} utilized in the evaluation by \cite{than2025gentle}. Specifically, we employed 20 PyTorch-pretrained models\footnote{\url{https://pytorch.org/vision/stable/models.html}} drawn from four widely used deep neural network architectures: DenseNet \cite{Huang_2017_CVPR}, ResNet \cite{he2016deep}, Swin Transformer \cite{Liu_2021_ICCV}, and VGG \cite{simonyan2015a}. All models are trained on the ImageNet-1K dataset, which comprises 1,281,167 training samples.
\begin{figure*}
\centering
\subfloat[Accuracy of different models on stable and unstable samples from the training dataset.]{\includegraphics[width=0.99\linewidth]{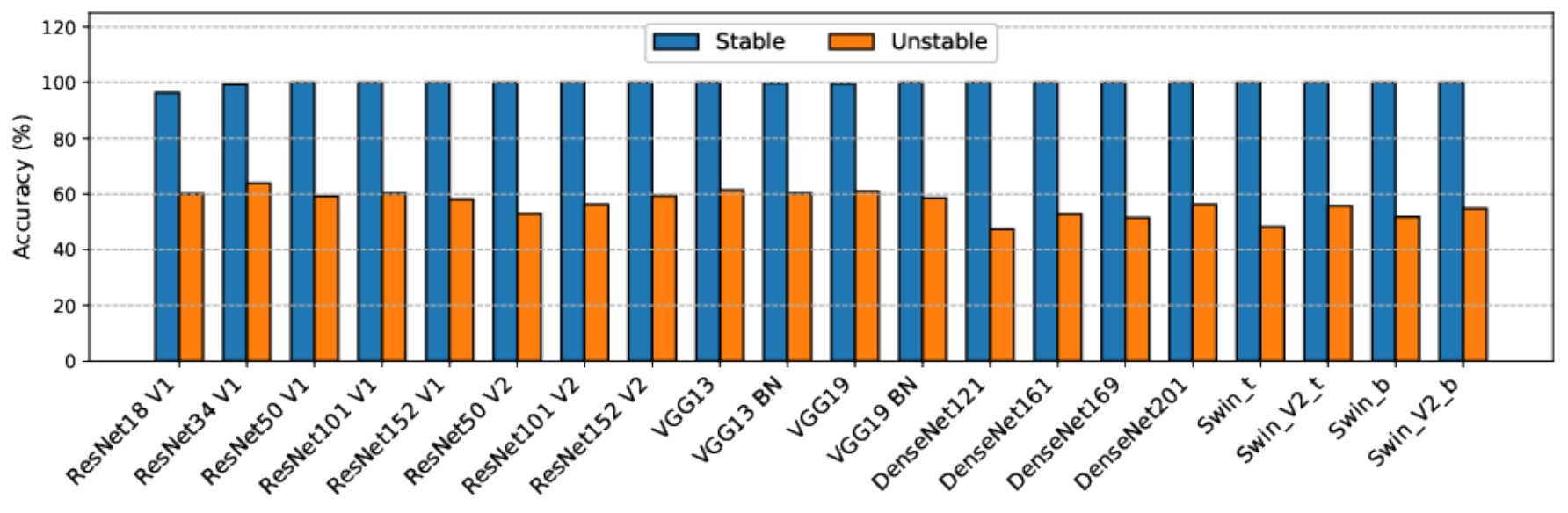}%
\label{fig:sable_vs_unstable_accuracies}}
\vfil
\subfloat[Accuracy of different models on stable and unstable samples from the validation dataset.]{\includegraphics[width=0.99\linewidth]{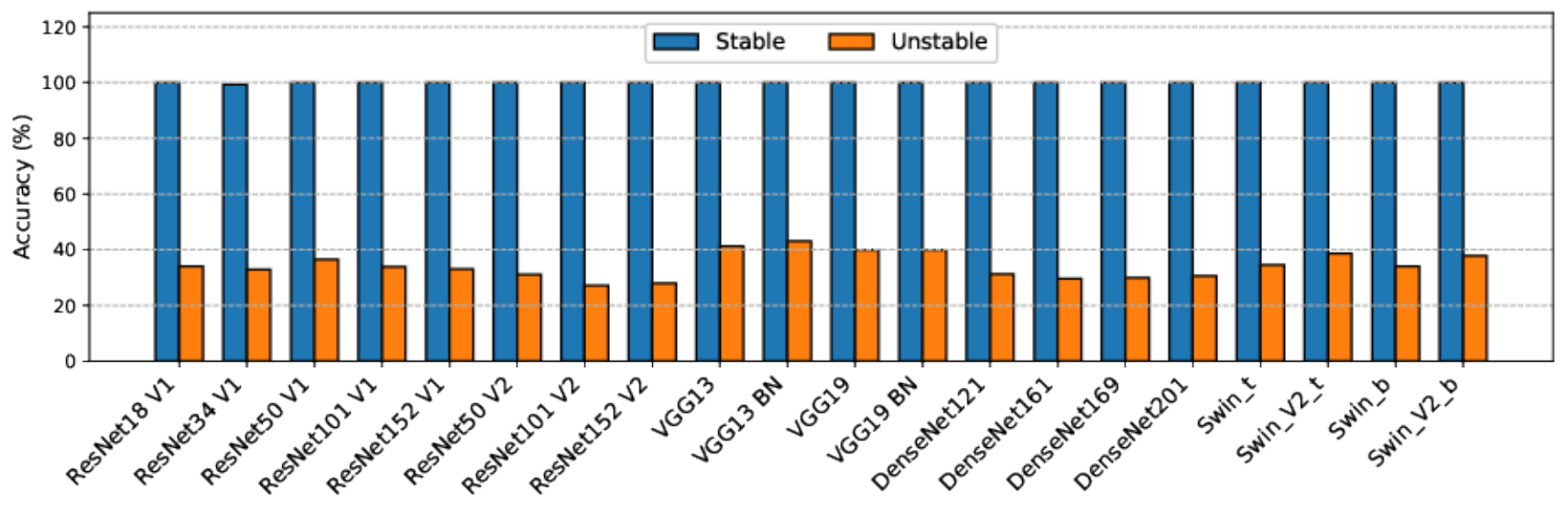}%
\label{fig:sable_vs_unstable_accuracies_val}}
\caption{Performance of different models on stable and unstable samples from the training dataset (a) and validation dataset (b). While stable samples yield near-perfect accuracies, unstable samples reveal substantial performance degradation, highlighting their utility as a proxy for robustness-based generalization assessment under the $0$–$1$ loss.}
\label{fig:stable_vs_unstable_combined}
\end{figure*}
\subsubsection{Implementation details} To group the training dataset into the stable and unstable samples we set the following parameters as follow: We perform a sensitivity analysis for $\kappa$ by varying its value between $10$ and $200$. Following \cite{10316046, zhong2021understanding}, we set 
$\kappa = 20$, since the average numbers of stable and unstable samples stabilize at this point and remain consistent for larger values. This ensures reliable results while avoiding extra computational overhead. For the perturbation degree $\epsilon$, we adopt the value $0.1$, as suggested in \cite{kawaguchi2022generalization}, for robustness analysis. We set the misprediction threshold $\tau$ to $0.2$ ($20\%$), meaning that a sample is considered stable if at least $80\%$ of its neighbors are predicted correctly. This choice strikes a balance between sensitivity and specificity \cite{10316046}, ensuring that unstable samples capture meaningful misprediction patterns without being overly restrictive. While $\tau = 0.2$ is effective in our setting, the threshold can be further tuned to optimize performance for different datasets or models.
\begin{table*}
    \centering
    \caption{Different generalization bounds comparison for state-of-the-art models trained on ImageNet. Bounds given by \eqref{eqn_2r_1}, \eqref{eqn_2r_2}, \eqref{eqn_2r_3}, \eqref{eqn:cluster_bound_both}, and \eqref{eqn_2r_our} correspond to distinct theoretical guarantees that upper bound the true error, Prob($\mathcal{A_S}(x) \neq y$). Lower values ($\downarrow$) indicate better performance. Best and second-best results are shown in bold and underlined, respectively. Std values from our bounds are consistently $<10^{-3}$ and reported as such for consistency with prior work.}
    \vspace{\floatsep}
    \setlength{\tabcolsep}{6pt} 
    \renewcommand{\arraystretch}{1.2} 
    \begin{tabular}{lcccccc}
        \toprule
        \textbf{Model} & \textbf{Error@1} & \textbf{Bound \eqref{eqn_2r_1}} & 
        \textbf{Bound \eqref{eqn_2r_2}} & \textbf{Bound \eqref{eqn_2r_3}} & 
        \textbf{Bound \eqref{eqn:cluster_bound_both}} & 
        \textbf{Bound \eqref{eqn_2r_our}} \\
        \midrule
        ResNet18 V1 & 0.302 & 1.527 $\pm$ 0.005 & 1.527 $\pm$ 0.005 & 0.917 $\pm$ 0.005 & \underline{\textit{$0.906 \,\pm\, (<10^{-3})$}} & $\mathbf{0.869 \,\pm\, (<10^{-3})}$ \\
        ResNet34 V1 & 0.267 & 1.462 $\pm$ 0.005 & 1.462 $\pm$ 0.005 & 0.805 $\pm$ 0.004 & \underline{\textit{$0.767\,\pm\, (<10^{-3})$}} & $\mathbf{0.760\,\pm\, (<10^{-3})}$ \\
        ResNet50 V1 & 0.239 & 1.431 $\pm$ 0.004 & 1.430 $\pm$ 0.004 & 0.743 $\pm$ 0.004 & \underline{\textit{$0.712 \,\pm\,(<10^{-3})$}} & $\mathbf{0.712 \,\pm\, (<10^{-3})}$ \\
        ResNet101 V1 & 0.226 & 1.401 $\pm$ 0.005 & 1.400 $\pm$ 0.005 & 0.688 $\pm$ 0.005 & \underline{\textit{$0.628 \,\pm\, (<10^{-3})$}}  & $\mathbf{0.628 \,\pm\, (<10^{-3})}$ \\
        ResNet152 V1 & 0.217 & 1.395 $\pm$ 0.004 & 1.394 $\pm$ 0.004 & 0.673 $\pm$ 0.004 & \underline{\textit{$0.604 \,\pm\, (<10^{-3})$}} & $\mathbf{0.604 \,\pm\, (<10^{-3})}$ \\
        ResNet50 V2 & 0.191 & 1.379 $\pm$ 0.004 & 1.377 $\pm$ 0.005 & 0.633 $\pm$ 0.004 & \underline{\textit{$0.544 \,\pm\, (<10^{-3})$}} & $\mathbf{0.544 \,\pm\, (<10^{-3})}$ \\
        ResNet101 V2 & 0.181 & 1.346 $\pm$ 0.004 & 1.344 $\pm$ 0.004 & 0.571 $\pm$ 0.004 & \underline{\textit{$0.502 \,\pm\, (<10^{-3})$}} & $\mathbf{0.502 \,\pm\, (<10^{-3})}$ \\
        ResNet152 V2 & 0.177 & 1.337 $\pm$ 0.004 & 1.333 $\pm$ 0.004 & 0.552 $\pm$ 0.004 & \underline{\textit{$0.485 \,\pm\, (<10^{-3})$}} & $\mathbf{0.485 \,\pm\, (<10^{-3})}$ \\
        VGG13 & 0.301 & 1.500 $\pm$ 0.005 & 1.499 $\pm$ 0.005 & $\mathbf{0.879 \pm 0.005}$ & $0.915 \,\pm\, (<10^{-3})$ & \underline{\textit{$0.914 \,\pm\, (<10^{-3})$}} \\
        VGG13 BN & 0.284 & 1.504 $\pm$ 0.004 & 1.503 $\pm$ 0.005 & $\mathbf{0.876 \pm 0.004}$ & $0.916 \,\pm\, (<10^{-3})$ & \underline{\textit{$0.914 \,\pm\, (<10^{-3})$}} \\
        VGG19 & 0.276 & 1.470 $\pm$ 0.005 & 1.469 $\pm$ 0.005 & $\mathbf{0.821 \pm 0.005}$ & $0.828 \,\pm\, (<10^{-3})$ & \underline{\textit{$0.825 \,\pm\, (<10^{-3})$}} \\
        VGG19 BN & 0.258 & 1.464 $\pm$ 0.004 & 1.463 $\pm$ 0.005 & 0.803 $\pm$ 0.004 & \underline{\textit{$0.789 \,\pm\, (<10^{-3})$}} & $\mathbf{0.789 \,\pm\, (<10^{-3})}$ \\
        DenseNet121 & 0.256 & 1.457 $\pm$ 0.005 & 1.457 $\pm$ 0.005 & 0.785 $\pm$ 0.005 & \underline{\textit{$0.739 \,\pm\, (<10^{-3})$}} & $\mathbf{0.739 \,\pm\, (<10^{-3})}$ \\
        DenseNet161 & 0.229 & 1.400 $\pm$ 0.004 & 1.398 $\pm$ 0.004 & 0.681 $\pm$ 0.004 & \underline{\textit{$0.612 \,\pm\, (<10^{-3})$}} & $\mathbf{0.612 \,\pm\, (<10^{-3})}$ \\
        DenseNet169 & 0.244 & 1.422 $\pm$ 0.004 & 1.421 $\pm$ 0.004 & 0.725 $\pm$ 0.004 & \underline{\textit{$0.665 \,\pm\, (<10^{-3})$}} & $\mathbf{0.665 \,\pm\, (<10^{-3})}$ \\
        DenseNet201 & 0.231 & 1.393 $\pm$ 0.004 & 1.392 $\pm$ 0.004 & 0.673 $\pm$ 0.005 & \underline{\textit{$0.562 \,\pm\, (<10^{-3})$}} & $\mathbf{0.562 \,\pm\, (<10^{-3})}$ \\
        Swin\_b & 0.164 & 1.347 $\pm$ 0.004 & 1.345 $\pm$ 0.004 & 0.563 $\pm$ 0.004 & \underline{\textit{$0.466 \,\pm\, (<10^{-3})$}} & $\mathbf{0.466 \,\pm\, (<10^{-3})}$ \\
        Swin\_t & 0.185 & 1.389 $\pm$ 0.004  & 1.387 $\pm$ 0.004 & 0.647 $\pm$ 0.004 & \underline{\textit{$0.536 \,\pm\, (<10^{-3})$}} & $\mathbf{0.536 \,\pm\, (<10^{-3})}$ \\
        Swin\_V2\_b & 0.159 & 1.345 $\pm$ 0.004 & 1.342 $\pm$ 0.004 & 0.551 $\pm$ 0.004 & \underline{\textit{$0.461 \,\pm\, (<10^{-3}$})} & $\mathbf{0.461 \,\pm\, (<10^{-3})}$ \\
        Swin\_V2\_t  & 0.179 & 1.373 $\pm$ 0.004 & 1.372 $\pm$ 0.004 &  0.613 $\pm$ 0.004 & \underline{\textit{$0.534 \,\pm\, (<10^{-3})$}} & $\mathbf{0.534 \,\pm\, (<10^{-3})}$ \\
        \bottomrule
    \end{tabular}
    \label{tab:bound_comparison}
\end{table*}
To evaluate the bounds in \eqref{eqn:cluster_bound_both} and \eqref{eqn_2r_our}, we follow the methodology proposed in~\cite{kawaguchi2022robustness} by partitioning the input space into 10{,}000 regions. To compute \( \widetilde{\epsilon}_j(\mathcal{A}_S) \) and \( \epsilon^*(\mathcal{A}_S) \) in \eqref{globalmax_epsilon}, we first randomly select 10{,}000 unstable validation samples to serve as centroids for forming unstable clusters. Using these centroids and following the procedure outlined in algorithm \ref{algo:stable_unstable_clusters}, we generated the unstable clusters $\{\widetilde{C}_j\}_{j=1}^{K}$. For each cluster \( j \in K \), we compute \( \widetilde{\epsilon}_j(\mathcal{A}_S)\), and subsequently compute \( \epsilon^*(\mathcal{A}_S) \). Similarly, to compute \( \hat{\epsilon}_j(\mathcal{A}_S) \), we randomly select 10{,}000 stable validation samples as centroids and following the procedure outlined in algorithm \ref{algo:stable_unstable_clusters}, we formed the stable clusters $\{\hat{C}_j\}_{j=1}^{K}$. For each cluster \( j \in K \), we compute \( \hat{\epsilon}_j(\mathcal{A}_S) \). For bounds in Equations~\eqref{eqn_2r_1}, \eqref{eqn_2r_2} and  \eqref{eqn_2r_3}, we reported the values provided in \cite{than2025gentle}. Following \cite{than2025gentle}, we set $\delta=0.01 (99\%~\text{confidence intervals})$ to compute the bounds. Also, we used $0-1$ loss function and as usual any bound beyond $1$ will be vacuous. The results for each model in Tables \ref{tab:bound_comparison} and \ref{tab:expected risk estimate} are averaged over five independent runs. For each model, we report both the mean and the standard deviation (std). In Table \ref{tab:bound_comparison}, for all models, the standard deviation (std) of our bounds is smaller than three decimal places. To ensure a fair comparison with existing results, we report these values as $(<10^{-3})$. Finally, we excluded the generalization bound presented in Theorem 6 of \cite{than2025gentle} in our empirical comparison due to fundamental differences in formulation. Specifically, their bound eliminates the empirical risk term and instead relies on an average loss-based robustness measure computed over local regions. In contrast, our bound explicitly retains the empirical risk term and introduces a localized robustness characterization that distinguishes between stable and unstable regions. These structural differences lead to distinct interpretations of robustness and generalization behavior, making a direct quantitative comparison non-equivalent. Therefore, our evaluation focuses on methods with comparable formulations to ensure a consistent and meaningful comparison.
\subsection{Results and discussion}
Figure~\ref{fig:sable_vs_unstable_accuracies} shows model performance on stable and unstable samples from the training dataset using $\kappa=20$. Additional sensitivity analysis over randomly selected values of $\kappa$ is provided in Appendix~\ref{Appendix_B}. Specifically, we report the percentage of unstable samples for each model across five randomly selected values of $\kappa$. Most models achieve near-perfect accuracy ($\sim 100\%$) on stable samples, but performance drops substantially on unstable samples. Although drawn from the training set, these unstable samples are more challenging to classify as they lie near decision boundaries and exhibit high sensitivity to small perturbations~\cite{NEURIPS2024_29753d93}. This property makes them an effective proxy for evaluating generalization. Stable samples capture confident predictions, whereas unstable samples expose the model’s behavior on challenging cases. The resulting performance gap underscores the importance of unstable samples for robustness-based generalization assessment. A similar trend is observed across all model architectures on the validation dataset, as shown in Figure~\ref{fig:sable_vs_unstable_accuracies_val}.

Table \ref{tab:bound_comparison} compares prior bounds (Equations~(\ref{eqn_2r_1}, \ref{eqn_2r_2}, \ref{eqn_2r_3})) with our proposed bounds (Equations~(\ref{eqn:cluster_bound_both}, \ref{eqn_2r_our})). The bound in \eqref{eqn_2r_our} is a special case of the more general bound in \eqref{eqn:cluster_bound_both}. We present both to show that, while their differences are minimal in practice, they become important when considering specific loss functions. In particular, the bound in \eqref{eqn_2r_our} is better suited for robustness-based generalization assessment under the $0$-$1$ loss. We observe that the prior bounds in \eqref{eqn_2r_1} and \eqref{eqn_2r_2} yield vacuous results across all models. In contrast, the bound in \eqref{eqn_2r_3} produces non-vacuous values; however, it relies on the average robustness value across clusters rather than the worst-case robustness within each region. Averaging can obscure weaknesses, making a model appear more reliable than it actually is, in terms of the generalization estimates. By contrast, worst-case robustness at clusters exposes these hidden vulnerabilities, leading to a more reliable generalization estimate. Our bound in \eqref{eqn_2r_our}, which leverages worst-case robustness at clusters, is non-vacuous in all cases and produces significantly better results, except for VGG13, VGG13 BN, and VGG19. In these VGG models, the average-loss approach in \eqref{eqn_2r_3} smooths over rare, extreme failures, yielding a tighter but overly optimistic bound. Additionally, our bound tends to perform particularly well for models with lower error rates, such as ResNet V2 and SwinTransformers. These models have fewer unstable regions. As a result, this reduces the impact of the worst-case measure, keeping the bound tight while still capturing robustness. For models with higher error, unstable regions dominate, and the worst-case approach can naturally produce looser but more reliable bounds. Finally, it can be observed that our bound in \eqref{eqn_2r_our} supports the point that, in the case of $0$-$1$ loss, unstable samples alone are sufficient for generalization assessment. Indeed, except for a few cases, bounds \eqref{eqn:cluster_bound_both} and \eqref{eqn_2r_our} converge to the same values.

Building on standard generalization assessment methods, we connect our theoretical bounds to practical estimation procedures by evaluating how well these bounds approximate the true risk of a trained model when the uncertainty terms are omitted. Specifically, the main generalization bounds, including Bounds \eqref{eqn_2r_1} - \eqref{eqn_2r_our}, provide formal upper bounds on the expected risk over the sample space. Equations \eqref{rob}–\eqref{localavg} correspond to existing methods that approximate the known test error, while Equations \eqref{localmax}–\eqref{globalmax} implement our proposed estimation approach based on localized robustness. In particular, our method highlights how unstable samples can effectively approximate the upper bound of the model’s expected risk on the known test set $Z_m$, thereby providing a practical instantiation of the theoretical bounds.
\begin{equation}
    \text{Rob} = \hat{\mathcal{R}}_{\mathcal{S}}(\mathcal{A_S}) +  \epsilon(\mathcal{S})
    \label{rob}
\end{equation}
\begin{equation}
    LocalRob = \hat{\mathcal{R}}_{S}(\mathcal{A_S}) +  \frac{n_j}{n}{\epsilon}_j(\mathbf{h}) 
    \label{localrob}
\end{equation}
\begin{equation}
    \text{LocalSen} = \hat{\mathcal{R}}_{\mathcal{S}}(\mathcal{A_S}) +  \sum_{j\in T_S}\frac{n_j}{n}\bar{\epsilon}_j(\mathbf{h})
    \label{localavg}
\end{equation}

\begin{equation}
    \text {LocalMax} = \hat{\mathcal{R}}_{\mathcal{S}}(\mathcal{A_S}) +  \sum_{j\in \widetilde{T}_S} \frac{\widetilde{n}_j(\mathcal{A_S})}{n}\widetilde{\epsilon}_j(\mathcal{A_S})
    \label{localmax}
\end{equation}

\begin{equation}
    \text{GlobalMax} = \hat{\mathcal{R}}_{\mathcal{S}}(\mathcal{A_S}) +  \sum_{j\in \widetilde{T}_S} \frac{\widetilde{n}(\mathcal{A_S})}{n}\epsilon^*(\mathcal{A_S})
    \label{globalmax}
\end{equation}
where:
\begin{align}
    \epsilon^*(\mathcal{A_S}) := \max_{j \in [K]} \max_{\substack{s, z \in \widetilde{C}_j }} \left| \ell(\mathcal{A_S}, z) - \ell(\mathcal{A_S}, s) \right|
   \label{globalmax_epsilon}
\end{align}
\begin{table*}
    \centering
    \caption{Upper generalization bounds on the true error (expected risk) (Prob(\(\mathcal{A_S}(x) \neq y\))) of different deep neural networks (DNN) pretrained on the ImageNet dataset. The upper bounds estimates on the Error@1 is provided without considering the uncertain terms. Bold entries denote the best result for each model, and underlined entries indicate the second best.}
    \vspace{\floatsep}
    \setlength{\tabcolsep}{6pt} 
    \renewcommand{\arraystretch}{1.2} 
    \begin{tabular}{lccccc}
        \toprule
        \textbf{Model} & \textbf{Err@1} & \textbf{Rob $\downarrow$} & 
        \textbf{LocalSen $\downarrow$} & \textbf{GlobalMax $\downarrow$} & 
        \textbf{LocalMax $\downarrow$} \\
        \midrule
        ResNet18 V1 & 0.302 & $1.212 \pm 4.0\times 10^{-5}$ & $0.602 \pm 1.9\times 10^{-4}$ &  \underline{\textit{$0.593\pm 1.4\times 10^{-5}$}}  & $\mathbf{0.591 \pm 1.1\times 10^{-5}}$   \\
        ResNet34 V1 & 0.267 & $1.157 \pm 4.0\times 10^{-5}$ & $0.499 \pm 1.8\times 10^{-4}$ &  \underline{\textit{$0.455\pm 2.1\times 10^{-5}$}}  & $\mathbf{0.454 \pm 1.1\times 10^{-5}}$  \\
        ResNet50 V1 & 0.239 & $1.131 \pm 6.0\times 10^{-5}$ & $0.443 \pm 3.0\times 10^{-4}$ &  \underline{\textit{$0.413\pm 2.3\times 10^{-5}$}}  & $\mathbf{0.412 \pm 1.6\times 10^{-5}}$ \\
        ResNet101 V1 & 0.226 & $1.105 \pm 5.0\times 10^{-5}$ & $0.392 \pm 1.7\times 10^{-4}$ &  \underline{\textit{$0.334 \pm 1.8\times 10^{-5}$}}  & $\mathbf{0.332 \pm 4.9\times 10^{-5}}$ \\
        ResNet152 V1 & 0.217 & $1.101 \pm 4.0\times 10^{-5}$ & $0.379 \pm 2.4\times 10^{-4}$ &  \underline{\textit{$0.314 \pm 3.0\times 10^{-5}$}}  & $\mathbf{0.310 \pm 4.3\times 10^{-5}}$ \\
        ResNet50 V2 & 0.191 & $1.089 \pm 4.0\times 10^{-5}$ & $0.344 \pm 3.5\times 10^{-4}$ &  \underline{\textit{$0.258\pm 1.0\times 10^{-5}$}} & $\mathbf{0.255 \pm 3.6\times 10^{-5}}$  \\
        ResNet101 V2 & 0.181 & $1.060 \pm 2.0\times 10^{-5}$ & $0.285 \pm 3.0\times 10^{-4}$ & \underline{\textit{$0.216 \pm 2.6\times 10^{-5}$}} & $\mathbf{0.216 \pm 1.4\times 10^{-5}}$ \\
        ResNet152 V2 & 0.177 & $1.052 \pm 4.0\times 10^{-5}$ & $0.267 \pm 2.5\times 10^{-4}$ & \underline{\textit{$0.200 \pm 1.6\times 10^{-5}$}}  & $\mathbf{0.200 \pm 1.1\times 10^{-5}}$\\
        VGG13 & 0.301 & $1.184 \pm 5.0\times 10^{-5}$ & $\mathbf{0.563 \pm 2.6\times 10^{-4}}$ &  $0.601 \pm 6.5\times 10^{-5}$ & \underline{\textit{$0.599 \pm 1.1\times 10^{-5}$}} \\
        VGG13 BN & 0.284 & $1.192 \pm 4.0\times 10^{-5}$ & $\mathbf{0.564 \pm 2.8\times 10^{-4}}$ & $0.605 \pm 5.7\times 10^{-5}$ & \underline{\textit{$0.602 \pm 1.7\times 10^{-5}$}}  \\
        VGG19 & 0.276 & $1.161 \pm 6.0\times 10^{-5}$ & $\mathbf{0.512 \pm 3.3\times 10^{-4}}$ & $0.520 \pm 4.5\times 10^{-5}$ & \underline{\textit{$0.516 \pm 2.4\times 10^{-5}$}}\\
        VGG19 BN & 0.258 & $1.159 \pm 4.0\times 10^{-5}$ & $0.499 \pm 2.9\times 10^{-4}$ & \underline{\textit{$0.488 \pm 5.9\times 10^{-5}$}} & $\mathbf{0.485 \pm 1.8\times 10^{-5}}$ \\
        DenseNet121 & 0.256 & $1.156 \pm 4.0\times 10^{-5}$ & $0.484 \pm 1.9\times 10^{-4}$ & \underline{\textit{$0.440 \pm 4.1\times 10^{-5}$}} & $\mathbf{0.438 \pm 2.1\times 10^{-5}}$  \\
        DenseNet161 & 0.229 & $1.105 \pm 4.0\times 10^{-5}$ & $0.386 \pm 1.9\times 10^{-4}$ & \underline{\textit{$0.318 \pm 1.9\times 10^{-5}$}} & $\mathbf{0.317 \pm 1.0\times 10^{-5}}$ \\
        DenseNet169 & 0.244 & $1.124 \pm 4.0\times 10^{-5}$ & $0.427 \pm 1.6\times 10^{-4}$ & \underline{\textit{$0.368 \pm 4.3\times 10^{-5}$}} & $\mathbf{0.367 \pm 4.1\times 10^{-5}}$ \\
        DenseNet201 & 0.231 & $1.098 \pm 4.0\times 10^{-5}$ & $0.378 \pm 2.1\times 10^{-4}$ & \underline{\textit{$0.316 \pm 2.6\times 10^{-5}$}} & $\mathbf{0.312 \pm 1.8\times 10^{-5}}$ \\
        SwinTransformer B & 0.164 & $1.065 \pm 4.0\times 10^{-5}$ & $0.280 \pm 1.0\times 10^{-4}$ & \underline{\textit{$0.186 \pm 1.8\times 10^{-5}$}} & $\mathbf{0.183 \pm 2.4\times 10^{-5}}$\\
        SwinTransformer T & 0.185 & $1.100 \pm 4.0\times 10^{-5}$ & $0.358 \pm 3.3\times 10^{-4}$ & \underline{\textit{$0.298 \pm 3.7\times 10^{-5}$}} & $\mathbf{0.247 \pm 3.2\times 10^{-5}}$\\
        SwinTransformer B V2 & 0.159 & $1.064 \pm 2.0\times 10^{-5}$ & $0.270 \pm 2.4\times 10^{-4}$ & \underline{\textit{$0.186 \pm 5.3\times 10^{-5}$}} & $\mathbf{0.180 \pm 4.2\times 10^{-5}}$\\
        SwinTransformer T V2 & 0.179 & $1.087 \pm 4.0\times 10^{-5}$ & $0.327 \pm 3.8\times 10^{-4}$ & \underline{\textit{$0.250 \pm 6.6\times 10^{-5}$}} & $\mathbf{0.248 \pm 2.5\times 10^{-5}}$\\
        \bottomrule
    \end{tabular}
    \vspace{\floatsep}
    \label{tab:expected risk estimate}
\end{table*}
Rob and LocalSen comes from prior work \cite{than2025gentle} whereas LocalMax comes from our bounds.
GlobalMax is a conservative form of the LocalMax approach and it applies the single worst-case instability across all unstable samples. In contrast, LocalMax evaluates worst-case robustness within each cluster, weighted by the number of unstable samples, which yields a tighter and more precise estimate. While LocalMax is better suited for detailed expected risk estimation, GlobalMax remains useful as a quick coarse measure, providing an upper bound that complements LocalMax through fast estimation. Table \ref{tab:expected risk estimate} summarizes the results of different approaches for estimating the expected risk (i.e., the known test error) of the models. The estimate corresponding to bound \eqref{eqn_2r_2} (Eqn. \eqref{localrob}) is equivalent to that of the Rob bound \eqref{rob}, which is derived from the base robustness-based generalization bound. Therefore, to avoid redundancy, we report only the results for the Rob bound. Both GlobalMax and LocalMax provide accurate estimates for most models when compared to LocalSen, with the exception of VGG13, VGG13 BN, and VGG19. This difference arises from the choice of robustness terms: LocalSen relies on the average local robustness within a cluster, whereas our approaches employ the worst-case robustness term. Because our method adopts a worst-case robustness perspective. Consequently, stable samples that are misclassified during the partitioning stage are classified as unstable. This increases the number of samples contributing to the robustness term. In such cases, the resulting bounds may be looser than those derived from prior approaches.

\section{Conclusion}\label{section:6}
In this paper, we examined prior work on robustness-based generalization bounds and demonstrated empirically that their vacuousness can also stem from the robustness term. To address this limitation, we developed tighter robustness-based bounds by partitioning the input space into stable and unstable samples, and further refining these into smaller clusters. Based on the model’s local behavior, we introduced a localized robustness term, decomposed into stable and unstable components. Our evaluations reveal that unstable samples are the dominant contributors to the robustness term. For classification tasks under the $0$–$1$ loss, prior bounds \cite{kawaguchi2022robustness} remain vacuous, while our proposed bound closely approximates the true error of classifiers. Unlike earlier approaches that rely solely on a data-dependent global robustness term, our bounds incorporate model properties to guide robustness, making them both model- and data-dependent. Moreover, our method relaxes the strict definition of robustness that traditionally constrains generalization assessment. As a result, our bounds yield sharper and more reliable generalization estimates, while also serving as effective tools for model comparison and selection.

This work is not without limitations. Effectiveness of our bounds depends on the quality of the partitioning strategy. If the approach incorrectly classifies certain stable samples as unstable, the resulting bounds may become loose. Nonetheless, this limitation can be mitigated, as recent advances in partitioning methods \cite{NEURIPS2024_29753d93} have shown strong effectiveness in handling training datasets.
As a direction for future research, our treatment of the robustness term in robustness-based generalization bounds can be extended to refine the uncertainty component. Prior work \cite{kawaguchi2022robustness} has demonstrated that the training dataset alone is sufficient to bound this uncertainty term. Building on this insight, a promising extension would be to decompose the uncertainty into stable and unstable components, which may yield tighter and more informative bounds. Finally, we plan to investigate how different loss functions influence the generalization behavior of various model architectures.\\  

\section*{Acknowledgment}
This work was supported in part by NASA ULI under Grants No. 80NSSC20M0161 and 80NSSC25M7098, by the National Science Foundation under Grant No. 2301553, by the U.S. Department of Transportation University Transportation Center under Grant No. 69A3552348327, and by NC-DOT AAM and UAS under Grant No. RP 2025-43.

\appendix \label{appendix}
\section{\textbf{Proof of Theorem~\ref{thm:cluster_bound}, Lemma~\ref{lem:lemma1} and Corollary~\ref{thm:ours}}}\label{appendx:A}
\begin{proof}[Proof of Theorem \ref{thm:cluster_bound}]
for any $\delta > 0$, with probability at least $1 - \delta$ over an iid draw of $n$ training samples, the generalization error of $\mathcal{A_S}$ is bounded as:
\begin{align}
&\mathcal{R}_Z(\mathcal{A_S}) \notag \leq \hat{\mathcal{R}}_{\mathcal{S}}(\mathcal{A_S})
+ \sum_{j \in \hat{T}_S}
\frac{\hat{n}_j(\mathcal{A_S})}{n}
\cdot
\hat{\epsilon}_j(\mathcal{A_S}) \notag \\
&\quad
+ \sum_{j \in \widetilde{T}_S}
\frac{\widetilde{n}_j(\mathcal{A_S})}{n}
\cdot
\widetilde{\epsilon}_j(\mathcal{A_S})
+ u_{3}(K, \mathcal{S}, \delta)
\end{align}

To present the proof of our main theorem, we start with the following observations. We define 
\[
\hat{n}_j := I^{\hat{S}}_{j} := \{\, i \in [\hat{n}(\mathcal{A_S})] \mid z_i \in \hat{C}_j \,\}
\]
and 
\[
\widetilde{n}_j := I^{\widetilde{S}}_{j} := \{\, i \in [\widetilde{n}(\mathcal{A_S})] \mid z_i \in \widetilde{C}_j \,\}
\]
and 

\begin{align*}
\alpha_j(h) &:= \mathbb{E}_{z}\big[ \ell(h, z) \,\big|\, z \in C_j \big] \notag \\
            &\;= \mathbb{E}_{\hat{z}}\big[ \ell(h, \hat{z}) \,\big|\, \hat{z} \in \hat{C}_j \big] + \mathbb{E}_{\widetilde{z}}\big[ \ell(h, \widetilde{z}) \,\big|\, \widetilde{z} \in \widetilde{C}_j \big] \notag \\
            &\;= \hat{\alpha}_j(h) + \widetilde{\alpha}_j(h)
\label{eq:alpha_decomposition}
\end{align*}
where 
\[ 
\hat{C}_j \cup  \widetilde{C}_j = C_j
\]

We begin the proof with the following lemma that relates the generation gap to the concentration of multinomial distribution.
\begin{lemma}
Consider a model $h \in \mathcal{H}$ and samples $z_i \in \mathcal{Z}$ for all $i \in [n]$. 
Then, the generalization gap can be expressed as:
\begin{align}
&\mathbb{E}_{z}[\ell(h, z)] 
- \frac{1}{n} \sum_{i=1}^n \ell(h, z_i) \notag \\
&= \sum_{j=1}^K \alpha_j(h) 
   \Big( \Pr(z \in C_j) - \frac{|n_j|}{n} \Big) \notag \\
&\quad + \sum_{j=1}^K \frac{|n_j|}{n} \Bigg[ 
      \alpha_j(h) - \frac{1}{|n_j|} 
      \sum_{i \in n_j} \ell(h, z_i) 
   \Bigg]
\label{eq:lemma8}
\end{align}
\end{lemma}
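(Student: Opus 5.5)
The identity in the lemma is purely algebraic. The plan is to expand both sides by conditioning on the partition $\{C_j\}_{j=1}^K$ and then add and subtract one cross term. No probabilistic argument is needed, because the statement holds for any fixed $h$ and any fixed sample $z_1,\dots,z_n$.

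\textbf{Step 1: decompose the population risk.} Since the clusters $C_1,\dots,C_K$ are disjoint and cover $\mathcal{Z}$, the law of total expectation gives
\[
\mathbb{E}_z[\ell(h,z)]=\sum_{j=1}^K \Pr(z\in C_j)\,\alpha_j(h),
\qquad \alpha_j(h)=\mathbb{E}_z[\ell(h,z)\mid z\in C_j].
\]
Here $n_j=\mathcal{I}^S_j=\{i\in[n]: z_i\in C_j\}$ denotes the index set of samples in $C_j$.

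\textbf{Step 2: decompose the empirical risk.} Each index $i$ lies in exactly one $n_j$, so the sets $n_j$ partition $[n]$. Hence
\[
\frac1n\sum_{i=1}^n\ell(h,z_i)=\sum_{j=1}^K\frac{|n_j|}{n}\cdot\frac{1}{|n_j|}\sum_{i\in n_j}\ell(h,z_i).
\]
In this sum, terms with $|n_j|=0$ contribute zero. We adopt the convention that $|n_j|\cdot\frac{1}{|n_j|}\sum_{\emptyset}=0$, so the inner average is harmless. This empty-cluster convention is the only point requiring care: it has to be stated explicitly so that the bracket in the second sum of \eqref{eq:lemma8} is well defined, with the coefficient $|n_j|/n=0$ annihilating it.

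\textbf{Step 3: add and subtract.} Subtract the expression in Step 2 from the one in Step 1. Then insert $\pm\sum_j \frac{|n_j|}{n}\alpha_j(h)$. Grouping $\sum_j\alpha_j(h)\big(\Pr(z\in C_j)-\frac{|n_j|}{n}\big)$ gives the first sum, and the remainder $\sum_j\frac{|n_j|}{n}\big[\alpha_j(h)-\frac{1}{|n_j|}\sum_{i\in n_j}\ell(h,z_i)\big]$ gives the second. This is exactly \eqref{eq:lemma8}.

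This is the standard decomposition behind Theorem~\ref{thm:kawaguchi}. The first sum is the multinomial-concentration term, later controlled to yield $u_3$. The second sum is the within-cluster term, later bounded by the local $\epsilon_j$ quantities, split into stable and unstable pieces via $\alpha_j=\hat\alpha_j+\widetilde\alpha_j$ in the proof of Theorem~\ref{thm:cluster_bound}.
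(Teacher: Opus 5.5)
Your proposal is correct and follows the paper's own route: the law of total expectation over the partition, regrouping the empirical sum by cluster, and adding and subtracting $\sum_j \frac{|n_j|}{n}\alpha_j(h)$. The paper takes exactly these steps to reach its intermediate decomposition before splitting further into stable and unstable parts. Your explicit convention for empty clusters, where $1/|n_j|$ is undefined, is a small point of care that the paper omits.
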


Where the expected error ($\mathbb{E}_{z}[\ell(h, z)]$) is expressed as the sum of the conditional expected error:
\begin{align*}
    &\mathbb{E}_{z}[\ell(h, z)] = \sum_{j=1}^K\mathbb{E}_{z}[\ell(h, z)|z\in C_j]\Pr(z \in C_j)\\
    &=\sum_{j=1}^K\mathbb{E}_{z_j}[\ell(h, z_j)]\Pr(z \in C_j),
\end{align*}
Here, $z_j$ is the random variable $z$ conditioned on the event $z \in C_j$. Based on this, first, we decompose the generalization gap into the following:
\begin{align}
&\mathbb{E}_{z}[\ell(h, z)] 
- \frac{1}{n} \sum_{i=1}^n \ell(h, z_i) \notag \\
&= \sum_{j=1}^K 
    \mathbb{E}_{z_j}[\ell(h, z_j)] 
    \Big( \Pr(z \in C_j) - \frac{|n_j|}{n} \Big) \notag \\
&\quad + \sum_{j=1}^K \frac{|n_j|}{n} 
    \mathbb{E}_{z_j}[\ell(h, z_j)] 
    - \frac{1}{n} \sum_{i=1}^n \ell(h, z_i)
\label{eq:16}
\end{align}

Secondly, we can expand each individual term in the second term of Equation~\eqref{eq:16} by decomposing them as follows:
\begin{align}
\sum_{j=1}^K \mathbb{E}_{z_j}[\ell(h, z_j)] \frac{|n_j|}{n} 
&= \frac{1}{n} \sum_{j=1}^K \Big(
      \mathbb{E}_{\hat{z}}[\ell(h, \hat{z})]\, |\hat{n}_j| \notag \\
&\quad + \mathbb{E}_{\widetilde{z}}[\ell(h, \widetilde{z})]\,|\widetilde{n}_j|
    \Big)
\label{eq:expansion_stable_unstable}
\end{align}

and 

\begin{align}
\frac{1}{n} \sum_{i=1}^n \ell(h, z_i) 
&= \frac{1}{n} \sum_{j=1}^K \sum_{i \in n_j} \ell(h, z_i) \notag \\
&= \frac{1}{n} \sum_{j=1}^K \Bigg(
       \sum_{i \in \hat{n}_j} \ell(h, \hat{z}_i) 
       + \sum_{i \in \widetilde{n}_j} \ell(h, \widetilde{z}_i)
    \Bigg)
\label{eq:decomposed_loss}
\end{align}

Now, the decomposed equation becomes this:
\begin{align}
&\sum_{j=1}^K \mathbb{E}_{z_j}[\ell(h, z_j)] \frac{|n_j|}{n} 
   - \frac{1}{n} \sum_{i=1}^n \ell(h, z_i) \notag \\
&= \frac{1}{n} \sum_{j=1}^K \Bigg(
       \mathbb{E}_{\hat{z}}[\ell(h, \hat{z})] |\hat{n}_j| 
       + \mathbb{E}_{\widetilde{z}}[\ell(h, \widetilde{z})] |\widetilde{n}_j| \notag \\
&\quad - \sum_{i \in \hat{n}_j} \ell(h, \hat{z}_i) 
       - \sum_{i \in \widetilde{n}_j} \ell(h, \widetilde{z}_i)
   \Bigg)
\label{eq:decomposed_difference}
\end{align}

\begin{align}
&=\frac{1}{n} \sum_{j=1}^K \Bigg(
      \big( \mathbb{E}_{\hat{z}}[\ell(h, \hat{z})] |\hat{n}_j| 
           + \mathbb{E}_{\widetilde{z}}[\ell(h, \widetilde{z})] |\widetilde{n}_j| \big) \notag \\
&\quad - \big( \sum_{i \in \hat{n}_j} \ell(h, \hat{z}_i) 
                + \sum_{i \in \widetilde{n}_j} \ell(h, \widetilde{z}_i) \big)
   \Bigg)
\label{eq:decomposed_difference2}
\end{align}

\begin{align}
&=\frac{1}{n} \sum_{j=1}^K \Bigg(
      \big( \mathbb{E}_{\hat{z}}[\ell(h, \hat{z})] |\hat{n}_j| 
           - \sum_{i \in \hat{n}_j} \ell(h, \hat{z}_i) \big) \notag \\
&\quad + \big( \mathbb{E}_{\widetilde{z}}[\ell(h, \widetilde{z})] |\widetilde{n}_j| 
                - \sum_{i \in \widetilde{n}_j} \ell(h, \widetilde{z}_i) \big)
   \Bigg)
\label{eq:decomposed_difference3}
\end{align}

Substituting these into Equation~\eqref{eq:16} yields the following:
\begin{align}
&\mathbb{E}_{z}[\ell(h, z)] - \frac{1}{n} \sum_{i=1}^n \ell(h, z_i) \notag \\
&= \sum_{j=1}^K \mathbb{E}_{z_j}[\ell(h, z_j)] 
    \Big( \Pr(z \in C_j) - \frac{|n_j|}{n} \Big) \notag \\
&\quad + \sum_{j=1}^K \frac{|\hat{n}_j|}{n} 
    \Big( \mathbb{E}_{\hat{z}}[\ell(h, \hat{z})] - \frac{1}{|\hat{n}_j|}\sum_{i \in \hat{n}_j} \ell(h, \hat{z}_i) \Big) \notag \\
&\quad + \sum_{j=1}^K \frac{|\widetilde{n}_j|}{n} 
    \Big( \mathbb{E}_{\widetilde{z}}[\ell(h, \widetilde{z})] - \frac{1}{|\widetilde{n}_j|}\sum_{i \in \widetilde{n}_j} \ell(h, \widetilde{z}_i) \Big)
\label{eq:generalization_gap_expanded}
\end{align}

\begin{align}
&\mathbb{E}_{z}[\ell(h, z)] - \frac{1}{n} \sum_{i=1}^n \ell(h, z_i) \notag \\
&= \sum_{j=1}^K \alpha_j(h) 
    \Big( \Pr(z \in C_j) - \frac{|n_j|}{n} \Big) \notag \\
&\quad + \sum_{j=1}^K \frac{|\hat{n}_j|}{n} 
    \Big( \hat{\alpha}_j(h) - \frac{1}{|\hat{n}_j|}\sum_{i \in \hat{n}_j} \ell(h, \hat{z}_i) \Big) \notag \\
&\quad + \sum_{j=1}^K \frac{|\widetilde{n}_j|}{n} 
    \Big( \widetilde{\alpha}_j(h) - \frac{1}{|\widetilde{n}_j|}\sum_{i \in \widetilde{n}_j} \ell(h, \widetilde{z}_i) \Big)
\label{eq:generalization_gap_alpha}
\end{align}

We can see that

\begin{align}
&\sum_{j=1}^{K} \frac{|\hat{n}_j|}{n} 
    \Big(\hat{\alpha}_j(h) - \frac{1}{|\hat{n}_j|}\sum_{i \in \hat{n}_j}\ell(h,\hat{z}_i)\Big) \notag \\
&= \sum_{j=1}^{K} \frac{1}{n} 
    \Big(|\hat{n}_j|\hat{\alpha}_j(h) - \sum_{i \in \hat{n}_j}\ell(h,\hat{z}_i)\Big) \notag \\
&= \frac{1}{n}\sum_{j=1}^{K}\sum_{i \in \hat{n}_j} 
    \Big(\hat{\alpha}_j(h) - \ell(h,\hat{z}_i)\Big) \notag \\
&\le \frac{1}{n}\sum_{j=1}^{K}\sum_{i \in \hat{n}_j} 
    \max_{\hat{z} \in Z_j} \Big(\ell(h,\hat{z}) - \ell(h,\hat{z}_i)\Big) \notag \\
&\le \frac{1}{n}\sum_{j=1}^{K}\sum_{i \in \hat{n}_j} \hat{\epsilon}_j(h) \notag \\
&= \sum_{j=1}^{K} \frac{|\hat{n}_j|}{n}\, \hat{\epsilon}_j(h)
\label{eqn:21}
\end{align}

and similarly 

\begin{align}
&\sum_{j=1}^{K} \frac{|\widetilde{n}_j|}{n} 
    \Big(\widetilde{\alpha}_j(h) - \frac{1}{|\widetilde{n}_j|}\sum_{i\in \widetilde{n}_j}\ell(h,\widetilde{z}_i)\Big) \notag \\
&= \sum_{j=1}^{K} \frac{1}{n} 
    \Big(|\widetilde{n}_j|\widetilde{\alpha}_j(h) - \sum_{i\in \widetilde{n}_j}\ell(h,\widetilde{z}_i)\Big) \notag \\
&= \frac{1}{n}\sum_{j=1}^{K}\sum_{i\in \widetilde{n}_j} 
    \Big(\widetilde{\alpha}_j(h) - \ell(h,\widetilde{z}_i)\Big) \notag \\
&\le \frac{1}{n}\sum_{j=1}^{K}\sum_{i\in \widetilde{n}_j} 
    \max_{\widetilde{z}\in Z_j} \Big(\ell(h,\widetilde{z}) - \ell(h,\widetilde{z}_i)\Big) \notag \\
&\le \frac{1}{n}\sum_{j=1}^{K}\sum_{i\in \widetilde{n}_j} \widetilde{\epsilon}_j(h) \notag \\
&= \sum_{j=1}^{K}\frac{|\widetilde{n}_j|}{n}\, \widetilde{\epsilon}_j(h)
\label{eqn:26}
\end{align}


Under the fixed partition of the sample space $\mathcal{Z}$ into $\{C_k\}_{j=1}^K$, each sample is assigned to a cluster according to the underlying data distribution. The corresponding cluster counts $(n_1, \dots, n_K)$ therefore follow a multinomial distribution with parameters $n$ and $(\Pr(C_1), \dots, \Pr(C_K))$. For each cluster $C_j$, let the total number of samples be denoted by $n_j = \hat{n}_j + \widetilde{n}_j$, where $\hat{n}_j$ and $\widetilde{n}_j$ represent the counts of stable and unstable samples within $C_j$, respectively.

Hence, according to Theorem 3 in \cite{kawaguchi2022robustness}, for any $\delta > 0$, with probability at least $1-\delta$, the following holds:
\begin{equation}
\begin{split}
\sum_{j=1}^K \alpha_j(h) 
&\Big( \Pr(z \in C_j) - \frac{|n_j|}{n} \Big) \le u_{3}(K, \mathcal{S}, \delta)
\label{eqn:27}
\end{split}
\end{equation}
Where 
\begin{align}
u_{3}(K, \mathcal{S}, \delta) = \mathcal{Q}_1 \sqrt{\frac{\ln(2K/\delta)}{n}} 
+ \frac{2 \mathcal{Q}_2 \ln(2K/\delta)}{n} \nonumber
\end{align}
\begin{align*}
\mathcal{Q}_1 &:= \sum_{k \in T_S} \left( \alpha_{\mathcal{T}_{S}^c}(A_S) + \sqrt{2} \, \alpha_k(A_S) \right) \sqrt{\frac{|\mathcal{I}_k^S|}{n}}, \\
\mathcal{Q}_2 &:= \alpha_{\mathcal{T}_{S}^c}(A_S) \cdot |T_S| + \sum_{k \in T_S} \alpha_k(A_S),
\end{align*}

\begin{align*}
T_S &:= \left\{ k \in [K] : |\mathcal{I}_k^S| \geq 1 \right\}, \\
\mathcal{I}_k^S &:= \left\{ i \in [n] : z_i \in C_k \right\}, \\
\alpha_k(h) &:= \mathbb{E}_{z} \left[ \ell(h, z) \mid z \in C_k \right], \\
\alpha_{\mathcal{T}_{S}^c}(A_S) &:= \max_{k \in \mathcal{T}_{S}^c} \alpha_k(A_S), \\
\mathcal{T}_{S}^c &:= [K] \setminus T_S.
\end{align*}

Combining \eqref{eq:lemma8}, \eqref{eqn:21}, \eqref{eqn:26}, and \eqref{eqn:27} 
completes the proof of Theorem \ref{thm:cluster_bound} in bound \eqref{eqn:cluster_bound_both}.
\end{proof}

\begin{proof}[Proof of Lemma~\ref{lem:lemma1}]
Recall that the input space $\mathcal{Z}$ can be decomposed into $\mathcal{Z}_1 = \hat{\mathcal{Z}}$ (stable subset) and $\mathcal{Z}_2 = \widetilde{\mathcal{Z}}$ (unstable subset), with $\alpha_i(\mathcal{A}_S) := \mathbb{E}_{z \sim \mathcal{Z}_i}[\ell(\mathcal{A}_S, z)]$ denoting the expected loss on subset $\mathcal{Z}_i$.

By the law of total expectation, the expected risk decomposes as
\begin{align}
\mathcal{R}_{\mathcal{Z}}(\mathcal{A}_S) & = \mathbb{E}_{z \sim \mathcal{Z}}[\ell(\mathcal{A}_S, z)] \notag \\
&= \mathbb{E}_{z \sim \mathcal{Z}_1}[\ell(\mathcal{A}_S, z)]\mathbb{P}(z \in \mathcal{Z}_1) \notag \\
& ~~~+ \mathbb{E}_{z \sim \mathcal{Z}_2}[\ell(\mathcal{A}_S, z)]\mathbb{P}(z \in \mathcal{Z}_2) \notag \\
& = \alpha_1(\mathcal{A}_S)\mathbb{P}(z \in \mathcal{Z}_1) + \alpha_2(\mathcal{A}_S)\mathbb{P}(z \in \mathcal{Z}_2).
\end{align}

Since $\hat{\mathcal{Z}}$ is constructed as the stable subset where $\mathcal{A}_S$ predicts correctly with high probability, we have $\alpha_1(\mathcal{A}_S) \approx 0$ under $0$-$1$ loss. Moreover, $\mathbb{P}(z \in \mathcal{Z}_2) \leq 1$, so
\begin{align}
\mathcal{R}_{\mathcal{Z}}(\mathcal{A}_S) & \leq \mathbb{P}(z \in \mathcal{Z}_2)\alpha_2(\mathcal{A}_S) \notag\\
& \leq \alpha_2(\mathcal{A}_S).
\end{align}
This completes the proof of Lemma \ref{lem:lemma1}.
\end{proof}

\begin{proof}[Proof of Corollary \ref{thm:ours}]
In the case of $0$-$1$ loss, or when the model exhibits smooth loss variations, the cumulative contribution of the stable clusters to the robustness term becomes negligible. Formally, this can be expressed as:
\[
\sum_{j=1}^{K} \frac{|\hat{n}_j|}{n}\, \hat{\epsilon}_j(\mathcal{A_S}) \approx 0.
\]

Consequently, the overall generalization bound in Theorem \ref{thm:cluster_bound} simplifies to include only the contributions from the unstable clusters. By combining equations \eqref{eq:lemma8}, \eqref{eqn:26}, and \eqref{eqn:27}, we obtain the desired bound in Corollary 1:


\begin{align}
\mathcal{R}_Z(\mathcal{A_S})
\leq\;
& \hat{\mathcal{R}}_{\mathcal{S}}(\mathcal{A_S})
+ \sum_{j \in \widetilde{T}_S}
\frac{\widetilde{n}_j(\mathcal{A_S})}{n}
\cdot
\widetilde{\epsilon}_j(\mathcal{A_S})
\notag\\
& + u_{3}(K, \mathcal{S}, \delta)
\end{align}
This completes the proof of Corollary \ref{thm:ours}. 
\end{proof}

\section{\textbf{Additional results: sensitivity analysis of \texorpdfstring{$\kappa$}{kappa}}}\label{Appendix_B}
Table~\ref{tab:sensitivity_kappa} demonstrates the robustness of our stability analysis across a broad range of neighborhood sizes $\kappa \in {20, 30, 80, 120, 200}$. The proportion of unstable samples remains highly consistent for each model, with maximum variations below $0.2\%$ across all values of $\kappa$. For example, ResNet18 V1 exhibits a narrow range of $39.51\%$–$39.54\%$, while Swin\_V2\_b varies only between $11.91\%$ and $11.93\%$.

This negligible sensitivity to $\kappa$ indicates that the proposed stable/unstable partitioning is inherently robust to the choice of neighbors value. Consequently, the use of $\kappa = 20$ for neighborhood generation is well-justified and does not compromise the reliability of the analysis.
\begin{table}
    \centering
    \caption{Percentage of unstable samples (\%) across $\kappa \in \{20, 30, 80, 120, 200\}$ for various deep neural networks (DNNs) pretrained on the ImageNet dataset. The selected $\kappa$ values span strict, practical, and asymptotic regimes, providing a comprehensive sensitivity analysis across neighborhood scales.}
    \label{tab:sensitivity_kappa}
    \resizebox{\columnwidth}{!}{%
    \setlength{\tabcolsep}{4pt}
    \renewcommand{\arraystretch}{1.1}
    \begin{tabular}{lccccc}
        \toprule
        \textbf{Model} & $\boldsymbol{\kappa=20}$ & $\boldsymbol{\kappa=30}$ & $\boldsymbol{\kappa=80}$ & $\boldsymbol{\kappa=120}$ & $\boldsymbol{\kappa=200}$ \\
        \midrule
        ResNet18 V1    & 39.53 & 39.51 & 39.52 & 39.53 & 39.54 \\
        ResNet34 V1    & 30.19 & 30.19 & 30.17 & 30.18 & 30.17 \\
        ResNet50 V1    & 28.79 & 28.78 & 28.77 & 28.79 & 28.80 \\
        ResNet101 V1   & 26.32 & 26.33 & 26.34 & 26.31 & 26.33 \\
        ResNet152 V1   & 24.64 & 24.64 & 24.62 & 24.64 & 24.63 \\
        ResNet50 V2    & 17.14 & 17.16 & 17.15 & 17.16 & 17.16 \\
        ResNet101 V2   & 15.57 & 15.55 & 15.55 & 15.57 & 15.56 \\
        ResNet152 V2   & 10.98 & 10.98 & 10.96 & 10.96 & 10.95 \\
        VGG13          & 42.32 & 42.30 & 42.29 & 42.28 & 41.96 \\
        VGG13 BN       & 42.29 & 42.28 & 42.27 & 42.26 & 42.26 \\
        VGG19          & 36.47 & 36.46 & 36.45 & 36.44 & 36.46 \\
        VGG19 BN       & 33.43 & 33.42 & 33.43 & 33.41 & 33.41 \\
        DenseNet121    & 27.63 & 27.63 & 27.62 & 27.61 & 27.63 \\
        DenseNet161    & 20.68 & 20.66 & 20.66 & 20.68 & 20.67 \\
        DenseNet169    & 23.69 & 23.68 & 23.69 & 23.67 & 23.67 \\
        DenseNet201    & 21.08 & 20.84 & 20.83 & 20.83 & 20.83 \\
        Swin\_b        & 12.65 & 12.65 & 12.65 & 12.62 & 12.63 \\
        Swin\_t        & 15.58 & 15.58 & 15.58 & 15.57 & 15.56 \\
        Swin\_V2\_b    & 11.93 & 11.93 & 11.91 & 11.92 & 11.91 \\
        Swin\_V2\_t    & 16.62 & 16.62 & 16.63 & 16.61 & 16.62 \\
        \bottomrule
    \end{tabular}}
\end{table}

\bibliographystyle{cas-model2-names}

\bibliography{cas-refs}


\vskip3pc

\end{document}